\documentclass[11pt]{article}

\usepackage[preprint]{acl}

\usepackage{times}
\usepackage{latexsym}
\usepackage[table]{xcolor}
\usepackage{amsmath,amssymb,amsthm}
\usepackage{mathtools}
\usepackage{booktabs}
\usepackage{algorithm}
\usepackage{algpseudocode}
\usepackage{hyperref}
\usepackage{cleveref}
\usepackage{thmtools}
\usepackage{enumitem}
\usepackage{xcolor}

\newtheorem{theorem}{Theorem}[section]
\newtheorem{proposition}[theorem]{Proposition}
\newtheorem{lemma}[theorem]{Lemma}
\newtheorem{corollary}[theorem]{Corollary}
\newtheorem{definition}[theorem]{Definition}

\newcommand{\D}{\mathcal{D}}

\newcommand{\Q}{\mathcal{Q}}
\newcommand{\eps}{\varepsilon}
\newcommand{\vbar}{\bar{v}}
\newcommand{\vtil}{\tilde{v}}

\usepackage[T1]{fontenc}

\usepackage[utf8]{inputenc}

\usepackage{microtype}
\usepackage{comment}
\usepackage{inconsolata}

\usepackage{graphicx}

\title{Understanding Quantization of Optimizer States in LLM Pre-training: Dynamics of State Staleness and Effectiveness of State Resets}

\author{Kristi Topollai \\
  New York University \\ 
  \texttt{kt2664@nyu.edu} \\\And
  Anna Choromanska \\
  New York University \\
  \texttt{ac5455@nyu.edu} \\}

\begin{document}
\maketitle

\begin{abstract}
Quantizing optimizer states is becoming an important ingredient of memory-efficient large-scale pre-training, but the resulting optimizer dynamics remain only partially understood. We study low-precision exponential moving average (EMA) optimizer states and show how quantization can cause many nominal updates to round back to the same stored value, making the state effectively stale and slowing adaptation beyond what the nominal decay would suggest. We then develop a simple predictive model of stalling that estimates one-step stalling probabilities and characterizes how stalling builds up over time after the initialization. This perspective provides a mechanistic explanation for why optimizer-state resets help in low precision: once a quantized EMA becomes effectively stale, resetting it can temporarily restore responsiveness. Motivated by this picture, we derive a simple theory-guided method for choosing useful reset periods, showing that in low precision the key question is not only whether resets help, but when they should be applied. Experiments in controlled simulations and LLM pre-training show that suitable reset schedules recover the performance lost to low-precision state storage while substantially reducing optimizer-state memory.
\end{abstract}

\section{Introduction}
\label{sec:intro}

Modern large-scale pre-training relies heavily on optimizers such as AdamW
\cite{kingma2017adammethodstochasticoptimization, loshchilov2018decoupled}, Lion \cite{chen2023symbolic}, Shampoo \cite{shampoo}, SOAP
\cite{soap}, and Muon \cite{liu2025muonscalablellmtraining}, which maintain
running statistics, typically exponential moving averages (EMAs), to smooth
gradients, adapt step sizes, or construct preconditioners. At the same time,
modern training pipelines increasingly store these states in reduced precision
to lower memory cost and improve throughput
\cite{dettmers20218,peng2023fp8lmtrainingfp8large,xi2024coat,wang20244}. This
makes optimizer-state quantization practically important, but its effect on the
actual dynamics of the optimizer remains insufficiently understood.

A related empirical observation is that resetting optimizer states can
improve pre-training
\cite{huang2025spam,huang2025stable,glentis2025scalable}. These resets are
usually motivated through spike mitigation or generic training stability.
However, in our experiments their benefit is strongly precision-dependent:
resets help substantially when EMA states are stored in BF16 or lower
precision, but are often much less useful in full precision. This suggests
that state quantization and state resets should not be viewed separately:
understanding why quantized states become ineffective is also the key to
understanding when resets help.

In this paper, we study that connection through the dynamics of quantized EMAs.
Consider a generic recursion
\[
s_t=\beta s_{t-1}+(1-\beta)\Delta_t,
\]
where $s_t$ is an optimizer state, $\beta$ is a decay coefficient close to
$1$, and $\Delta_t$ denotes the new update information. This form captures the
state evolution of many widely used optimizers. In low precision, each step
first forms a higher-precision update and then maps it back to the storage
format. When the increment is too small relative to the local spacing of the
floating-point grid, the update rounds back to the same stored value, so the
state does not change. We refer to this phenomenon as \emph{state-update
stalling}, also known as signal swamping and unchanged-state
dynamics in low-bit EMA updates
\cite{higham1993accuracy,wang2018training,xu2025pushinglimitslowbitoptimizers}.
In high-decay recursions, repeated stalling slows adaptation, effectively
lengthens the memory of the EMA beyond what the nominal decay suggests, and
makes quantized states substantially less responsive than their full-precision
counterparts.

This viewpoint also helps explain several recurring features of low-precision
training. In particular, it clarifies why large $\beta_2$ values become more
problematic as precision decreases, why stochastic rounding
\cite{gupta2015deep,Ozkara2025StochasticRF,croci2022stochastic} helps but does
not fully remove the problem when stalling is severe, and why the first and
second moments of Adam-like methods respond differently to quantization. Most
importantly, it explains why resets can help and why their timing
matters: once a quantized EMA has become effectively stale, resetting it can
restore responsiveness, but resetting too early discards useful averaging. The
key question is therefore not only whether resets help, but when they should be
applied.

We formalize these effects through a simple predictive model of quantized EMA
dynamics. The model combines local floating-point grid geometry with a
short-horizon approximation to estimate one-step stalling probabilities and to
characterize the finite post-initialization period during which a quantized EMA remains
meaningfully responsive. This responsive \emph{startup window} provides a
natural basis both for understanding when resets should help and for choosing
useful reset periods as a function of decay and precision. We validate the
resulting predictions in controlled simulations and LLM pre-training, showing
that suitable reset schedules recover the performance lost to
low-precision state storage while substantially reducing optimizer-state
memory.

\section{Related Work}

\paragraph{Optimizers for large-scale pre-training.}
Modern LLM pre-training increasingly follows compute-optimal scaling and open
training recipes~\cite{hoffmann2022empirical,llama1,llama2,groeneveld-etal-2024-olmo},
most of which rely on optimizers with persistent state. Beyond momentum SGD and
Adam/AdamW~\cite{kingma2017adammethodstochasticoptimization, loshchilov2018decoupled}, this includes memory-reduced adaptive methods such as
Adafactor~\cite{adafactor}, SM3~\cite{anil2019memory} and Adam-mini~\cite{zhang2024adam}, structured
preconditioners such as Shampoo~\cite{shampoo} and SOAP~\cite{soap}, and more
recent alternatives such as Lion~\cite{chen2023symbolic} and
Muon~\cite{liu2025muonscalablellmtraining}.

\paragraph{Low-precision and quantized pre-training.}
Recent low-precision pre-training builds on FP8 formats~\cite{Micikevicius2022FP8FF},
with systems such as FP8-LM~\cite{peng2023fp8lmtrainingfp8large} and
COAT~\cite{xi2024coat} demonstrating end-to-end FP8 LLM training, and more
recent work extending to FP4 through mixed-precision or native low-bit
stacks~\cite{pmlr-v267-wang25ae,chmiel2025fp4,castro2026quartetnativefp4training}.
Closest to our setting is direct quantization of optimizer states: 8-bit
and 4-bit states for Adam-like methods~\cite{dettmers20218,4bit_states},
4-bit Shampoo preconditioners~\cite{wang20244}, and 8-bit quantized Muon
states~\cite{gupta2025effective}. Our contribution is complementary: rather
than proposing a new quantization or scaling scheme, we analyze a specific failure mode of
quantized EMA states, state-update stalling under requantization, and use it
to explain when stochastic rounding and periodic resets help.

\paragraph{Optimizer state resetting.}
Restarting or state resetting have a long history in optimization~\cite{ODonoghue2012AdaptiveRF}
and were later adapted to neural network training~\cite{srsgd}. More recent
work shows that clearing optimizer states can help in non-stationary settings
such as reinforcement learning~\cite{rl_restarts}. In LLM training, methods
such as SPAM~\cite{huang2025spam,huang2025stable} and
\cite{glentis2025scalable} use momentum resets mainly for spike suppression and
stabilization, while \cite{topollaiadaptive} show that implicit resets via
adapting the momentum coefficient can also improve performance. In contrast, we
show that in low precision the benefit of resets is instead tied to
quantized-state stalling: resets help by temporarily restoring responsiveness
once EMA states become effectively stale.

\section{State update stalling under quantized EMA states}
\label{sec:stalling}

\paragraph{Preliminaries.}
We consider exponential moving averages (EMAs) of the form
\begin{equation}
x_t = \beta x_{t-1} + (1-\beta)s_t,
\label{eq:ema-generic}
\end{equation}
where $s_t$ is a stochastic signal (e.g., $g_t$ or $g_t^2$) and
$\beta\in(0,1)$ is the decay coefficient. In the quantized setting, the EMA
state is stored in low precision but updated in high. Let $x_t^q$ denote the
stored state, let $D(\cdot)$ denote dequantization (equivalently, an exact cast
to FP32), and let $Q(\cdot)$ denote quantization back to the storage format.
One update therefore reads $x_{t-1}=D(x_{t-1}^q)$, forms the FP32
update $x_t=\beta x_{t-1}+(1-\beta)s_t$, and writes back
$x_t^q=Q(x_t)$. Since $D(\cdot)$ is just a cast, no new quantization error is
introduced during the read phase; all new errors enter on the write phase
through $Q(\cdot)$.

\begin{definition}[State update stalling]
\label{def:state-update-stalling}
We say that state update stalling occurs at step $t$ if
\begin{equation}
x_t^q=x_{t-1}^q.
\label{eq:stall-def}
\end{equation}
Equivalently, the high precision update fails to move the stored state to a new
representable value.
\end{definition}

Quantization therefore acts as a nonlinear gate in the EMA recursion,
suppressing sufficiently small updates. We focus on the standard Adam setting,
where the optimizer maintains first- and second-moment EMAs of the gradient
$g_t=\nabla f(\theta_t)$, namely the momentum
$m_t=\beta_1 m_{t-1}+(1-\beta_1)g_t$ and the variance estimate
$v_t=\beta_2 v_{t-1}+(1-\beta_2)g_t^2$. Although the same gating mechanism
applies to both moments, it is typically more pronounced for the second
moment, whose quantization effects
are often more visible in practice~\cite{fishman2024scaling,tang2025convergence,collage}. Moreover, the second
moment admits a simple probabilistic model that matches empirical behavior
well. We therefore focus the analysis on the second-moment EMA.

\subsection{One-step stalling as a geometry problem}
\label{sec:stall-geometry}

We analyze a single coordinate of the second-moment EMA: 
\[
v_t = \beta_2 v_{t-1} + (1-\beta_2)g_t^2.
\]
Our analysis is coordinate-wise and does not depend on the global training
objective; it requires only a short-window approximation for one gradient
coordinate. Concretely, over the window of interest we assume that the
coordinate has an approximately fixed local scale $\sigma$, and we analyze the
EMA in the regime where $v_{t-1}$ is of the same order as $\sigma^2$. To obtain
closed-form expressions, we further use the local Gaussian approximation
\begin{equation}
\frac{g_t^2}{\sigma^2}\overset{d}{\approx}\chi_1^2,
\qquad\text{equivalently}\qquad
\frac{g_t}{\sigma}\overset{d}{\approx}\mathcal{N}(0,1).
\label{eq:chi-model}
\end{equation}
This is a tractable short-window approximation for the step-wise distribution
of a single gradient coordinate, and is in line with recent evidence that
stochastic gradients exhibit substantial Gaussian structure in this sense
\cite{xie2023overlooked}.

The high precision update increment is then
\begin{equation}
\Delta_t := v_t - v_{t-1}
= (1-\beta_2)\bigl(g_t^2-v_{t-1}\bigr).
\label{eq:stall-delta}
\end{equation}

Let $u_t:=\operatorname{ulp}(v_{t-1}^q)$ denote the local spacing between
adjacent representable numbers around the stored value. Under nearest rounding,
state-update stalling occurs whenever the high precision proposal remains within the
rounding cell of the current stored value, i.e.
\begin{equation}
|\Delta_t| < \frac{u_t}{2}.
\label{eq:stall-ulp-condition}
\end{equation}
Equivalently, the stored value changes only if $|\Delta_t|\ge u_t/2$.

Let $\eps$ denote the characteristic relative grid spacing of the number
format used for the EMA state under consideration. For the second moment, which
is nonnegative, we use an unsigned E2M2 FP4 format, giving $\eps=2^{-2}$; for
BF16 and E4M3 FP8 we use $\eps=2^{-7}$ and $\eps=2^{-3}$, respectively. The
precise relation between $u_t$ and $\eps$, including the mantissa-dependent
factor, is derived in \Cref{app:ulp-geometry}. Using the local-stationarity approximation $v_{t-1}\approx \sigma^2$ and
defining
\begin{equation}
z_t := \frac{g_t^2}{\sigma^2}\sim \chi_1^2,
\label{eq:z-chi}
\end{equation}
the update-to-spacing ratio can be summarized by a single effective precision
parameter introduced next.

Under the effective-spacing approximation, we summarize the update-to-spacing
ratio by a single effective precision parameter. Here $\eps$ denotes a
format-dependent effective spacing parameter, and $
\bar m:=\frac{1}{\ln 2}$ is the mean normalized mantissa under the log-uniform mantissa model used here
as a simple single-scalar approximation to the exact mantissa dependence in
floating-point spacing \cite{goldberg1991every}; see
\Cref{app:ulp-geometry} for details.

\begin{definition}[Effective precision ratio]
\label{def:rhohat}
We define
\begin{equation}
\hat\rho := \frac{\eps}{2(1-\beta_2)\bar m}.
\label{eq:rhohat}
\end{equation}
\end{definition}
With this convention,
\begin{equation}
\frac{|\Delta_t|}{u_t}\approx \frac{|z_t-1|}{2\hat\rho},
\label{eq:delta-over-u-rho}
\end{equation}
so all stalling statements below are controlled by $\hat\rho$.

\subsection{Nearest rounding induces hard-gated stalling}
\label{sec:nr-stalling}

Under nearest rounding (NR), the condition \eqref{eq:stall-ulp-condition}
becomes, under the effective-spacing approximation,
\begin{equation}
|z_t-1|< \hat\rho.
\label{eq:nr-gate}
\end{equation}
Equivalently, the stored value changes only when $|z_t-1|\ge \hat\rho$.

\begin{proposition}[NR stalling probability]
\label{prop:nr-stall}
Let $z_t\sim \chi_1^2$. Under the effective-spacing approximation, the
one-step probability of state-update stalling under nearest rounding is
\begin{equation}
P^{\mathrm{NR}}_{\mathrm{stall}}(\hat\rho)
\approx
F_{\chi^2_1}(1+\hat\rho)-F_{\chi^2_1}(\max(0,1-\hat\rho)).
\label{eq:nr-stall-prob}
\end{equation}
\end{proposition}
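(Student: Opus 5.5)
The argument is a direct probability computation once the stalling event has been rewritten in terms of $z_t$. I would start from the nearest-rounding stalling condition \eqref{eq:stall-ulp-condition}, $|\Delta_t|<u_t/2$, and use the effective-spacing approximation \eqref{eq:delta-over-u-rho}, $|\Delta_t|/u_t\approx |z_t-1|/(2\hat\rho)$. Substituting this ratio into the condition gives $|z_t-1|/(2\hat\rho)<1/2$, which is the gate $|z_t-1|<\hat\rho$ of \eqref{eq:nr-gate}. Under this approximation the stalling event $\{x_t^q=x_{t-1}^q\}$ of Definition~\ref{def:state-update-stalling} is identified with the event $\{1-\hat\rho<z_t<1+\hat\rho\}$. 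The symbol ``$\approx$'' in the statement inherits exactly the approximations used here: $v_{t-1}\approx\sigma^2$, the single-scalar mantissa factor $\bar m$ replacing the true $\operatorname{ulp}$, and the $\chi^2_1$ model \eqref{eq:chi-model}.

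Next I would intersect this interval with the support of $z_t$. Since $z_t=g_t^2/\sigma^2\ge 0$, the event is $\{\max(0,1-\hat\rho)<z_t<1+\hat\rho\}$. This requires two cases.

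\emph{Case $\hat\rho<1$.} The lower endpoint is $1-\hat\rho>0$.

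\emph{Case $\hat\rho\ge 1$.} The lower constraint is vacuous, and the whole interval $[0,1+\hat\rho)$ stalls. This covers the regime where even $g_t=0$ cannot move the stored state.

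In both cases I would apply $P(a<z_t<b)=F_{\chi^2_1}(b)-F_{\chi^2_1}(a)$ with $a=\max(0,1-\hat\rho)$ and $b=1+\hat\rho$. Because $\chi^2_1$ is absolutely continuous, the choice between strict and non-strict inequalities at the endpoints (ties under round-half) has probability zero. This yields \eqref{eq:nr-stall-prob}. If desired, I would also note the closed form $F_{\chi^2_1}(x)=\operatorname{erf}(\sqrt{x/2})$.

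The main obstacle is not the probability computation, which is immediate, but justifying the reduction \eqref{eq:delta-over-u-rho}. The stored value $v_{t-1}^q$ has $\operatorname{ulp}$ equal to $\eps$ times a binade scale, and its ratio to $v_{t-1}$ depends on the mantissa. Replacing this by $\eps\,\sigma^2/\bar m$, or by whatever relation \Cref{app:ulp-geometry} fixes, and replacing $v_{t-1}$ by $\sigma^2$ is the source of the approximation. I would take that step as given from the effective-spacing approximation stated before Definition~\ref{def:rhohat}, rather than re-derive it. I would remark only that an exact treatment would average the gate width over the mantissa distribution, and that the proposition uses its single-scalar surrogate.
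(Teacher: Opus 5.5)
Your proposal is correct and follows the paper's own argument essentially verbatim: you reduce the nearest-rounding condition $|\Delta_t|<u_t/2$ to the gate $|z_t-1|<\hat\rho$ via the effective-spacing approximation, intersect the resulting interval with the support $z_t\ge 0$, and read off the probability as the difference of $\chi^2_1$ CDF values at $\max(0,1-\hat\rho)$ and $1+\hat\rho$. Your side remarks are also consistent with the paper: endpoint ties have probability zero, and the closed form $F_{\chi^2_1}(x)=\operatorname{erf}(\sqrt{x/2})$ is equivalent to the paper's $2\Phi(\sqrt{x})-1$.
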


\begin{corollary}[High-stalling regime under nearest rounding]
\label{cor:nr-large-rho}
When $\hat\rho\ge 1$, the lower term in \eqref{eq:nr-stall-prob} vanishes, so
\begin{equation}
P^{\mathrm{NR}}_{\mathrm{stall}}(\hat\rho)
\approx
F_{\chi^2_1}(1+\hat\rho)\approx 1.
\label{eq:nr-stall-prob-simple}
\end{equation}
Thus, once the EMA reaches its steady magnitude, most updates are blocked by
the rounding gate.
\end{corollary}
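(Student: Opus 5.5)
The corollary is a direct specialization of \Cref{prop:nr-stall}, so the plan is to substitute $\hat\rho\ge 1$ into \eqref{eq:nr-stall-prob} and then bound the surviving term. First, for $\hat\rho\ge 1$ we have $1-\hat\rho\le 0$, hence $\max(0,1-\hat\rho)=0$. Since $\chi^2_1$ is supported on $[0,\infty)$ and has no atom at $0$, $F_{\chi^2_1}(0)=0$. Substituting into \eqref{eq:nr-stall-prob} gives $P^{\mathrm{NR}}_{\mathrm{stall}}(\hat\rho)\approx F_{\chi^2_1}(1+\hat\rho)$.

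Equivalently, in terms of the gate \eqref{eq:nr-gate}: when $\hat\rho\ge 1$ the lower constraint $z_t>1-\hat\rho$ is automatically satisfied because $z_t\ge 0$. The stalling event therefore reduces to $\{z_t<1+\hat\rho\}$.

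Second, to justify the claim $F_{\chi^2_1}(1+\hat\rho)\approx 1$, I would write the complementary probability using $z_t=Z^2$ with $Z\sim\mathcal N(0,1)$:
\[
1-F_{\chi^2_1}(1+\hat\rho)=P\bigl(|Z|\ge \sqrt{1+\hat\rho}\bigr)=2\bigl(1-\Phi(\sqrt{1+\hat\rho})\bigr).
\]
This tail is monotonically decreasing in $\hat\rho$, and the Mills-ratio bound gives $2(1-\Phi(a))\le \sqrt{2/\pi}\,e^{-a^2/2}/a$. At the boundary $\hat\rho=1$ the tail is about $0.157$, so the stalling probability is already about $0.84$. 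It then decays like $e^{-(1+\hat\rho)/2}$.

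Finally, I would connect this to the formats of interest. For BF16 with $\beta_2=0.999$, one has $\hat\rho=2^{-7}\ln 2/(2\cdot 10^{-3})\approx 2.7$, giving a tail of about $0.02$ and hence $P^{\mathrm{NR}}_{\mathrm{stall}}\approx 0.98$. FP8 and FP4 give far larger $\hat\rho$ and correspondingly smaller tails.

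The main obstacle is only interpretive: the statement ``$\approx 1$'' is not uniform over $\hat\rho\ge 1$, since at $\hat\rho=1$ the probability is only about $0.84$. I would therefore phrase the result as an exact identity followed by the explicit exponential tail bound. ``Most updates are blocked'' then holds for every $\hat\rho\ge1$, since $P^{\mathrm{NR}}_{\mathrm{stall}}>1/2$, and the approximation to $1$ becomes sharp as $\hat\rho$ grows.
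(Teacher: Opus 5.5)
Your argument is correct and is essentially the paper's. The corollary is just \Cref{prop:nr-stall} with $\max(0,1-\hat\rho)=0$ and $F_{\chi^2_1}(0)=0$. The paper leaves ``$\approx 1$'' qualitative, so your Mills-ratio tail bound and your caveat that the value is only about $0.84$ at $\hat\rho=1$ are a sound sharpening, not a different route.

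One numerical slip needs fixing. For BF16, $1+\hat\rho\approx 3.71$ gives $1-F_{\chi^2_1}(3.71)=2\bigl(1-\Phi(1.926)\bigr)\approx 0.054$. So $P^{\mathrm{NR}}_{\mathrm{stall}}\approx 0.946$, matching \Cref{tab:nr-ss-stall}, not $0.98$. This does not affect the logic.
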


For $\beta_2=0.999$, the effective ratios from \eqref{eq:rhohat} give the
steady-state stalling probabilities in \Cref{tab:nr-ss-stall} which are also verified empirically in \Cref{fig:staleness}.

\begin{table}[H]
\small
\centering
\begin{tabular}{lccc}
\toprule
Format & $\eps$ & $\hat\rho$ & $P^{\mathrm{NR}}_{\mathrm{stall}}(\hat\rho)$ \\
\midrule
BF16          & $2^{-7}$ & $2.71$  & $\approx 0.946$ \\
FP8 (E4M3)    & $2^{-3}$ & $43.3$  & $\approx 1.000$ \\
FP4 (E2M2)    & $2^{-2}$ & $86.6$  & $\approx 1.000$ \\
\bottomrule
\end{tabular}
\caption{Theoretical steady-state stalling probabilities under nearest rounding (NR) from \eqref{eq:rhohat}.}
\label{tab:nr-ss-stall}
\end{table}

These values already reveal the core problem: at FP8 and FP4, once the second
moment reaches its steady-state scale, \emph{almost every step stalls}. In this
regime the EMA rarely changes, so the optimizer effectively stops updating its
estimate of gradient magnitude and instead operates with nearly frozen
statistics. Importantly, this effect is largely independent of model or
training scale. The analysis is expressed in terms of the normalized variable
$z_t = g_t^2 / \mathbb{E}[g^2]$, whose distribution does not depend on the
absolute magnitude of gradients, while the ratio $\hat\rho$ governing stalling
depends only on the precision format and decay $\beta_2$. As a result, changing
the overall gradient scale, model width, or dataset shifts the EMA magnitude
and floating-point spacing together, leaving the stalling probability
essentially unchanged.

\subsection{Empirical sensitivity to simulated moment stalling}
\label{sec:simulated-stalling}

To isolate the effect of stale EMA updates, we run a controlled intervention
during LLaMA-60M pre-training on WikiText: after warmup, each update of a chosen
moment buffer is independently skipped with probability $p$. This is not meant
as a literal model of quantization-induced stalling, but as a direct stress
test of optimizer sensitivity to persistent unchanged-state behavior.

\begin{figure}[H]
    \centering
    \includegraphics[width=0.9\linewidth]{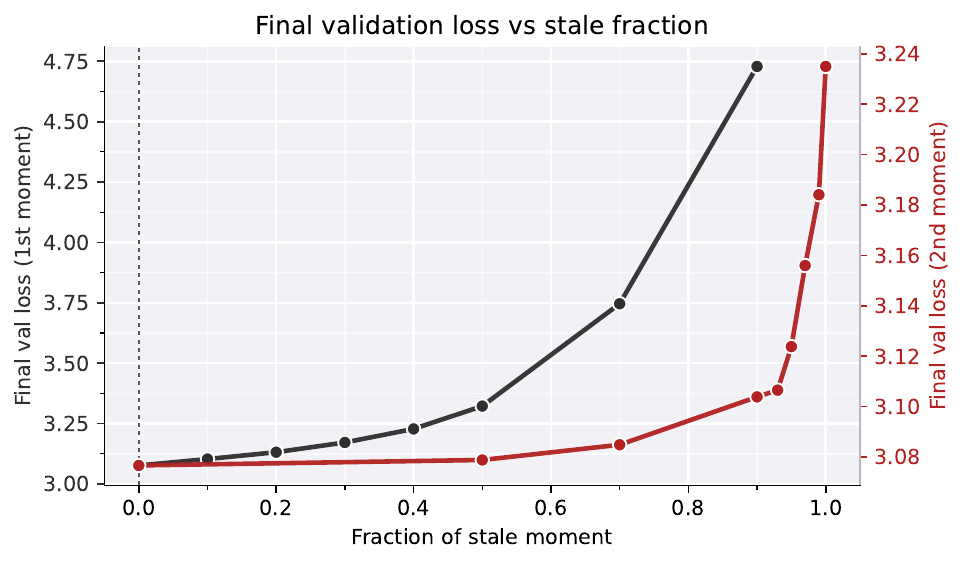}
    \caption{Simulated EMA stalling during LLaMA-60M pre-training on WikiText.
    After warmup, each moment update is skipped independently with probability
    $p$.}
    \label{fig:simulated_staleness}
\end{figure}

\Cref{fig:simulated_staleness} shows a clear trend: second-moment stalling
degrades convergence gradually, mainly through worse final loss, whereas
first-moment stalling is far more destabilizing. This is consistent with their
roles in Adam-like methods: the second moment mainly controls effective
learning-rate scaling, while the first moment carries directional information
and is therefore more sensitive to missed updates.

\section{What stalling does to the EMA dynamics}
\label{sec:stall-consequences}

\subsection{A coarse-grained effective decay approximation}
\label{sec:beta-eff}

A useful summary of stalling is through an \emph{effective decay}
approximation. If a step stalls, the stored state remains unchanged
($v_t^q=v_{t-1}^q$), which is equivalent to using decay $1$ on that step rather
than the nominal decay $\beta_2$. If the empirical stalling rate is
approximately constant at $P_{\mathrm{stall}}$ over a short window, only a
fraction $1-P_{\mathrm{stall}}$ of steps effectively apply the nominal EMA
contraction, giving
\begin{equation}
\beta_2^{\mathrm{eff}} \approx 1-(1-\beta_2)(1-P_{\mathrm{stall}}),
\label{eq:beta-eff-rewrite}
\end{equation}
and, for $\tau:=\frac{1}{1-\beta_2}$,
\begin{equation}
\tau_{\mathrm{eff}} \approx \frac{1}{(1-\beta_2)(1-P_{\mathrm{stall}})}
= \frac{\tau}{1-P_{\mathrm{stall}}}.
\label{eq:tau-eff-rewrite}
\end{equation}
This is not an exact reduction of the quantized recursion, since stalling
events are state dependent and correlated across steps, but a local mean-field
summary of how a high stalled fraction slows EMA response. The same persistence
also makes successive optimizer updates more temporally correlated, a behavior
linked to instability in Adam-like methods \cite{molybog2023theory}.

\subsection{A finite startup window before stalling dominates}
\label{sec:startup-window}

Stalling does not occur immediately after state initialization. When the EMA
state is small, the quantization grid is fine relative to the typical update,
so many updates pass through. As the state grows toward its stationary scale,
the probability of a stalled update rises, creating a finite \emph{startup
window} during which the EMA remains meaningfully responsive.

Under the same idealized model as above, after $j$ updates have been
accumulated from a zero-initialized state, the reference high precision trajectory is
\begin{equation}
v_j^{\mathrm{ref}} := \sigma^2 \phi_j,
\qquad
\phi_j:=1-\beta_2^j.
\label{eq:phi-j}
\end{equation}

\begin{proposition}[Transient NR stalling probability]
\label{prop:transient-stall}
With $z_t$ defined in \eqref{eq:z-chi}, the transient stalling probability
under nearest rounding is approximated by
\begin{equation}
P_{\mathrm{stall}}^{\mathrm{NR}}(j)
\approx
F_{\chi^2_1}\!\bigl((1+\hat\rho)\phi_j\bigr),
\label{eq:pstall-transient-main}
\end{equation}
for the formats of interest, where $\hat\rho\ge 1$.
\end{proposition}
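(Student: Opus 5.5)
We repeat the one-step geometry argument of \Cref{sec:stall-geometry} along the transient trajectory \eqref{eq:phi-j}, with $\sigma^2$ replaced by the current reference scale $v_{j-1}^{\mathrm{ref}}$. Concretely, we replace $v_{t-1}$ by $\sigma^2\phi_j$ at update $j$, using $\phi_{j-1}\approx\phi_j$ since $1-\beta_2$ is small.

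\emph{Step 1: the increment.} From \eqref{eq:stall-delta} we have $\Delta = (1-\beta_2)\sigma^2(z-\phi_j)$ with $z\sim\chi_1^2$.

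\emph{Step 2: the spacing.} The local spacing now scales with the current magnitude. Under the same effective-spacing approximation that gives \eqref{eq:delta-over-u-rho}, $u\approx 2\bar m\,\eps\,\sigma^2\phi_j$, up to the convention fixed in \Cref{app:ulp-geometry}.

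\emph{Step 3: the gate.} Dividing, the condition \eqref{eq:stall-ulp-condition} becomes
\[
\frac{|z-\phi_j|}{\phi_j}<\hat\rho,
\qquad\text{equivalently}\qquad
(1-\hat\rho)\phi_j<z<(1+\hat\rho)\phi_j .
\]
This is exactly \eqref{eq:nr-gate} with the reference value $1$ rescaled by $\phi_j$. The key point is that $\eps$ is a relative spacing, so $\hat\rho$ is unchanged along the transient and only the window location scales with $\phi_j$.

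\emph{Step 4: the probability.} Integrating against the $\chi^2_1$ law, as in \Cref{prop:nr-stall}, gives
\[
P^{\mathrm{NR}}_{\mathrm{stall}}(j)\approx F_{\chi_1^2}\bigl((1+\hat\rho)\phi_j\bigr)-F_{\chi_1^2}\bigl(\max(0,(1-\hat\rho)\phi_j)\bigr).
\]
Since $\phi_j>0$ and $\hat\rho\ge1$, the lower endpoint is nonpositive and the second term vanishes, exactly as in \Cref{cor:nr-large-rho}. This leaves \eqref{eq:pstall-transient-main}. As a sanity check, at $\phi_j\to1$ the expression recovers the stationary value of \Cref{prop:nr-stall}.

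\emph{Main obstacle.} The hardest step is justifying Step 2, that the ulp scales proportionally with $\phi_j$ through a single scalar $\bar m$. In reality the ulp is piecewise constant in the binade, and the stored value $v^q_{j-1}$ can deviate from $v^{\mathrm{ref}}_{j-1}$ because earlier steps already stalled. I would handle this by stating explicitly that we work in the scale-invariant effective-spacing approximation of \Cref{app:ulp-geometry}, which treats the mantissa as log-uniform and hence makes $u/v$ approximately constant. I would also assume that the quantized trajectory tracks the reference trajectory early on, where stalling is rare because $\phi_j$ is small. The fact that $F_{\chi_1^2}((1+\hat\rho)\phi_j)$ is small for small $\phi_j$ makes this assumption self-consistent. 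I would also note that treating $z$ as independent of the current state follows from the i.i.d. Gaussian approximation \eqref{eq:chi-model}.
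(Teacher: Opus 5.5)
Your derivation is essentially the paper's. Like the paper, you test the next update against the reference scale $\sigma^2\phi_j$ and use that the spacing is relative, so $\hat\rho$ is unchanged. You then get the window $(1-\hat\rho)\phi_j<z<(1+\hat\rho)\phi_j$ (the paper writes this as $|W-1|<\hat\rho$ with $W=z/\phi_j$) and drop the lower term because $\hat\rho\ge 1$.

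There is one slip to fix. In Step 2 the spacing should be $u\approx(\eps/\bar m)\,\sigma^2\phi_j$, from $\operatorname{ulp}(x)\approx(\eps/m)\,x$ in \Cref{app:ulp-geometry}. Your $u\approx 2\bar m\,\eps\,\sigma^2\phi_j$ would give the gate $|z-\phi_j|<2\bar m^2\hat\rho\,\phi_j$ rather than $\hat\rho\,\phi_j$, so Step 3 does not follow as written.

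Two minor points. First, the paper avoids the approximation $\phi_{j-1}\approx\phi_j$, which is poor for small $j$ since $\phi_0=0$, by letting $j$ count the updates already accumulated. Second, the paper does not justify the ideal-trajectory substitution by self-consistency. That argument is weak for FP8 and FP4, where $(1+\hat\rho)\phi_j$ reaches order one within roughly $10$--$25$ steps, so stalling is not rare for long. Instead the paper uses that $F_{\chi^2_1}\bigl((1+\hat\rho)\phi\bigr)$ is increasing in $\phi$, so the prediction is conservative when the quantized EMA lags the reference.
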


This expression centers the stalling event at the \emph{current} state scale
$\phi_j$ rather than the steady-state value, and therefore captures the
increase in stalling as the EMA grows away from initialization.

\begin{definition}[Startup window length]
\label{def:startup-window}
For a tolerance $P_0\in(0,1)$, define the \emph{startup window length} as the
smallest $j$ such that $P_{\mathrm{stall}}(j)\ge P_0.$
\end{definition}

\begin{proposition}[Ideal startup window]
\label{prop:jstar-ideal}
Let
\[
\phi^\star(P_0):=\frac{F^{-1}_{\chi^2_1}(P_0)}{1+\hat\rho}.
\]
Whenever $\phi^\star(P_0)<1$, the ideal startup window is
\begin{equation}
j^\star_{\mathrm{ideal}}(P_0)
=
\left\lceil
\frac{\log\!\bigl(1-\phi^\star(P_0)\bigr)}{\log\beta_2}
\right\rceil.
\label{eq:jstar-corrected-main}
\end{equation}
\end{proposition}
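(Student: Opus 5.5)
The plan is to combine the transient stalling approximation of \Cref{prop:transient-stall} with \Cref{def:startup-window} and invert the resulting inequality in $j$. Since $\hat\rho\ge 1$ for the formats of interest, \eqref{eq:pstall-transient-main} gives $P^{\mathrm{NR}}_{\mathrm{stall}}(j)\approx F_{\chi^2_1}\bigl((1+\hat\rho)\phi_j\bigr)$. We take this approximation as the working definition of $P_{\mathrm{stall}}(j)$, which is consistent with the ``ideal'' label in the statement. The startup window is then the smallest integer $j\ge 0$ with $F_{\chi^2_1}\bigl((1+\hat\rho)\phi_j\bigr)\ge P_0$.

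\textbf{Step 1: convert the condition into a threshold on $\phi_j$.} The $\chi^2_1$ CDF is continuous and strictly increasing on $(0,\infty)$, so $F^{-1}_{\chi^2_1}$ is well defined on $(0,1)$. For $P_0\in(0,1)$ we therefore have $F_{\chi^2_1}(y)\ge P_0$ if and only if $y\ge F^{-1}_{\chi^2_1}(P_0)$. Because $1+\hat\rho>0$, the condition becomes $\phi_j\ge \phi^\star(P_0)$.

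\textbf{Step 2: solve for $j$.} Write $\phi_j=1-\beta_2^j$. The condition becomes $\beta_2^j\le 1-\phi^\star(P_0)$. Here $\phi^\star(P_0)>0$ because $F^{-1}_{\chi^2_1}(P_0)>0$. The hypothesis $\phi^\star(P_0)<1$ makes the right-hand side lie in $(0,1)$, so its logarithm is finite and negative. Taking logarithms gives $j\log\beta_2\le \log\bigl(1-\phi^\star(P_0)\bigr)$. Dividing by $\log\beta_2<0$ reverses the inequality, yielding $j\ge \log\bigl(1-\phi^\star(P_0)\bigr)/\log\beta_2$. This ratio is positive.

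\textbf{Step 3: extract the smallest integer.} The map $j\mapsto\beta_2^j$ is strictly decreasing, so the set of admissible $j$ is an upward-closed set of integers. Its minimum is the ceiling of the bound from Step 2, which is exactly \eqref{eq:jstar-corrected-main}. If $\phi^\star(P_0)\ge 1$, the threshold is never reached, because $\phi_j<1$ for every finite $j$; this explains why the statement carries that hypothesis.

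The main obstacle is not the algebra but how the claim relates to the approximation. Definition~\ref{def:startup-window} refers to the true $P_{\mathrm{stall}}(j)$, whereas we can only compute with the closed form of \Cref{prop:transient-stall}. I would make this explicit by stating that $j^\star_{\mathrm{ideal}}$ is the startup window of the idealized model, that is, with the approximation treated as an equality. The remaining care points are the direction of the inequality when dividing by $\log\beta_2<0$, and the strict monotonicity of $F_{\chi^2_1}$, which is needed for the inverse to give an exact equivalence rather than a one-sided bound.
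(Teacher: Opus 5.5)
Your proposal is correct and follows essentially the same route as the paper's appendix derivation. Using $\hat\rho\ge 1$, the transient law reduces to $F_{\chi^2_1}\bigl((1+\hat\rho)\phi_j\bigr)\ge P_0$, which is inverted to $\phi_j\ge\phi^\star(P_0)$ and then solved for the smallest integer $j$ via monotonicity of $\phi_j=1-\beta_2^j$. Your added care about strict monotonicity of $F_{\chi^2_1}$, the sign of $\log\beta_2$, and treating the approximation as the defining model of the ideal window only makes explicit what the paper leaves implicit.
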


In practice, measured stalling often starts from a nonzero floor
$P_{\mathrm{init}}$, so we use
\begin{equation}
P_{\mathrm{total}}(j)
=
P_{\mathrm{init}} + (1-P_{\mathrm{init}})\,P_{\mathrm{stall}}(j),
\label{eq:Ptotal-main}
\end{equation}
which is equivalent to replacing $P_0$ by the effective threshold
\begin{equation}
P_0^{\mathrm{eff}}
:=
\frac{P_0-P_{\mathrm{init}}}{1-P_{\mathrm{init}}},
\label{eq:P0eff-main}
\end{equation}
and therefore
\begin{equation}
j^\star(P_0)=
\begin{cases}
0, & P_0\le P_{\mathrm{init}},\\[3pt]
j^\star_{\mathrm{ideal}}(P_0^{\mathrm{eff}}), & P_0>P_{\mathrm{init}}.
\end{cases}
\label{eq:jstar-total}
\end{equation}

\begin{figure*}[t]
    \centering
    \includegraphics[width=\linewidth]{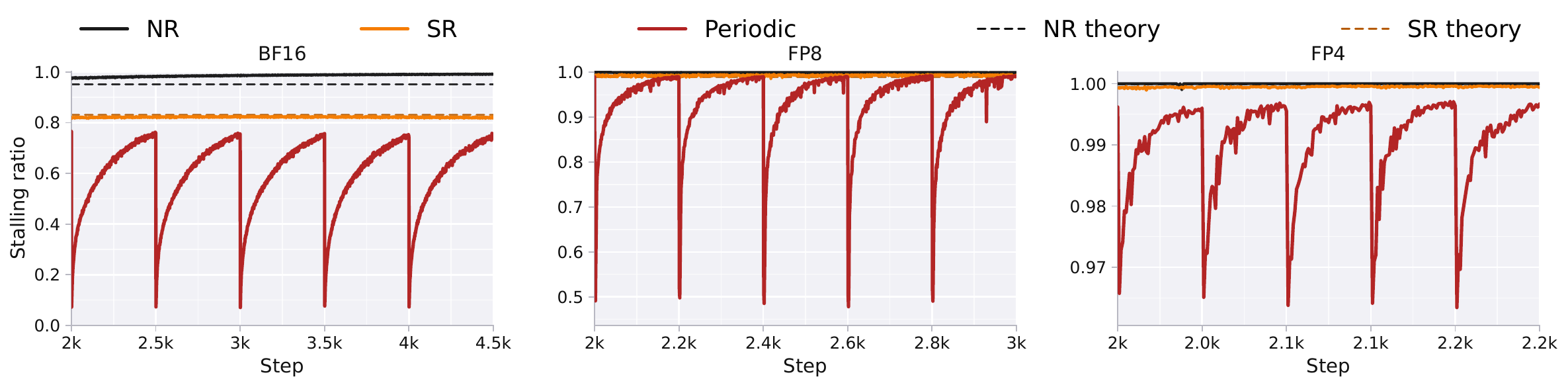}
    \vspace{-2em}
    \caption{Measured stalling fraction during LLM pre-training for quantized
    EMA states. As the second moment grows away from initialization, the
    stalled fraction rises and then plateaus near its steady-state level.
    Lower-precision formats enter the stalled regime earlier and remain there
    longer. 100M-parameter LLama model trained on C4.}
    \vspace{-1.5em}
    \label{fig:staleness}
\end{figure*}

Details on the empirical floor $P_{\mathrm{init}}$ and numerical startup-window
values are deferred to \Cref{app:init-floor,app:jstar}.
\Cref{fig:staleness} illustrates two main trends. First, the stalled fraction
is already nonzero immediately after initialization, and this empirical floor
$P_{\mathrm{init}}$ is itself larger at lower precision; we discuss its origin
and estimate it in \Cref{app:init-floor}. Second, the stalled fraction rises as
the second moment grows toward its steady-state scale, producing a finite
responsive startup window before the EMA becomes largely stale. This window is
longest in BF16 and substantially shorter in FP8; for FP4, accounting for the
empirical floor is essential to separate the reset-sensitive buildup of
stalling from the large format-dependent baseline. This precision-dependent
startup behavior will later provide the key intuition for why state resets can
help and why their timing matters.

\subsection{First moment stalling}
\label{sec:first-moment-restarts}

The first moment behaves qualitatively similar to the second even though its
update scale and ulp scale do not normalize in the same way. For the second
moment, the update is driven by $g_t^2$ and the stored state is of the same
order as the local variance, so the variance scale largely cancels from the
update-to-ulp ratio, yielding the nearly universal approximation developed
above. For the first moment, $
m_t = \beta_1 m_{t-1} + (1-\beta_1)g_t$,
the relevant scale is set directly by the signed gradient. As a result, first-moment stalling depends on the local signal-to-noise ratio
between the mean gradient and its fluctuations, and therefore varies across
coordinates rather than being
determined primarily by the precision format. This makes the first moment less
amenable to a simple closed-form analysis of the kind developed above for the
second moment. Empirically, however, its behavior follows a qualitatively
similar pattern, as we show in the Appendix \Cref{app:first_moment}.

\section{A recipe for stalling mitigation}
\label{sec:solutions-rewrite}

Quantized EMA states fail primarily because stalling makes the effective EMA
dynamics too slow. We discuss two complementary ways to mitigate this.

\subsection{Stochastic rounding}
\label{sec:sr-rewrite}

Stochastic rounding (SR) rounds to one of the two neighboring representable
values with probabilities proportional to distance, replacing the NR hard gate
by a soft one.

\begin{proposition}[SR stalling probability]
\label{prop:sr-stall}
Under the effective-spacing approximation at steady state, with $z_t$ as in
\eqref{eq:z-chi}, the conditional probability that SR returns to the current
stored value is
\begin{equation}
p^{\mathrm{SR}}_{\mathrm{stall}}(z_t;\hat\rho)
=
\max\!\Bigl(0,\;1-\frac{|z_t-1|}{2\hat\rho}\Bigr),
\label{eq:sr-soft-gate}
\end{equation}
and hence $P^{\mathrm{SR}}_{\mathrm{stall}}(\hat\rho)
=
\mathbb{E}\!\left[
p^{\mathrm{SR}}_{\mathrm{stall}}(z_t;\hat\rho)
\right]$
\end{proposition}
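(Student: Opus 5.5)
The argument conditions on $z_t$ and then averages, reusing the effective-spacing reduction from the nearest-rounding analysis. Fix $v_{t-1}^q$ and let $u_t=\operatorname{ulp}(v_{t-1}^q)$. Stochastic rounding of the FP32 proposal $v_t=v_{t-1}+\Delta_t$ works as follows. If $v_t$ lies strictly between $v_{t-1}^q$ and an adjacent grid point $v_{t-1}^q\pm u_t$, it is sent to the far neighbour with probability $|\Delta_t|/u_t$. Otherwise it stays at $v_{t-1}^q$, with probability $1-|\Delta_t|/u_t$. If $|\Delta_t|\ge u_t$, the proposal has left the cell $[v_{t-1}^q-u_t,\,v_{t-1}^q+u_t]$. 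Both candidate grid points are then different from $v_{t-1}^q$, so the stored value cannot return and the stall probability is $0$. Combining the two cases, conditionally on $\Delta_t$,
\[
\Pr\bigl(v_t^q=v_{t-1}^q\mid \Delta_t\bigr)=\max\!\Bigl(0,\,1-\frac{|\Delta_t|}{u_t}\Bigr).
\]
This is exact given a locally uniform spacing $u_t$ on both sides of $v_{t-1}^q$.

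Next I would substitute the effective-spacing approximation \eqref{eq:delta-over-u-rho}. At steady state $v_{t-1}\approx\sigma^2$, and \eqref{eq:stall-delta} gives $|\Delta_t|=(1-\beta_2)\sigma^2|z_t-1|$. With $u_t\approx 2\hat\rho(1-\beta_2)\sigma^2$ from \Cref{def:rhohat} and \Cref{app:ulp-geometry}, we get $|\Delta_t|/u_t\approx |z_t-1|/(2\hat\rho)$. Plugging this in yields \eqref{eq:sr-soft-gate}. As a consistency check, the nearest-rounding gate of \Cref{prop:nr-stall} is the indicator of $|\Delta_t|/u_t<1/2$, i.e.\ of $|z_t-1|<\hat\rho$. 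The SR gate is the matching tent function, equal to $1$ at $z_t=1$ and supported on $|z_t-1|<2\hat\rho$. The unconditional statement then follows from the tower property: $P^{\mathrm{SR}}_{\mathrm{stall}}=\mathbb{E}\bigl[\Pr(\text{stall}\mid z_t)\bigr]$ with $z_t\sim\chi^2_1$ from \eqref{eq:z-chi}. This step is immediate because the rounding randomness is drawn independently of $g_t$.

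The main obstacle is the grid geometry near binade boundaries. If $v_{t-1}^q$ is a power of two, the spacing below it is $u_t/2$ while the spacing above is $u_t$. A downward proposal can then cross a grid point even when $|\Delta_t|<u_t$, and the exact conditional probability becomes piecewise with different slopes on each side. I would handle this the same way the paper handles \Cref{prop:nr-stall}: the effective-spacing approximation replaces the mantissa-dependent local spacing by the single scalar $\eps\bar m\,v_{t-1}$. Under this approximation the grid is symmetric with spacing $u_t$ near the current value, and the formula holds as stated. Boundary asymmetries are absorbed into the "$\approx$" of that approximation.
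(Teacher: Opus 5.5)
Your proposal is correct and follows the paper's own argument. You condition on the increment to get the stochastic-rounding tent $\max(0,1-|\Delta_t|/u_t)$ on a locally uniform grid, substitute $|\Delta_t|/u_t\approx|z_t-1|/(2\hat\rho)$ from the effective-spacing approximation, and average over $z_t\sim\chi^2_1$. One slip in your last paragraph: the effective spacing is $(\eps/\bar m)\,v_{t-1}$, not $\eps\bar m\,v_{t-1}$, as your own substitution $u_t\approx 2\hat\rho(1-\beta_2)\sigma^2$ already implies.
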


SR is only a partial fix. At $\beta_2=0.999$, the steady-state stalling
probabilities remain high (\Cref{tab:sr-stall}) and are consistent with the
empirical behavior in \Cref{fig:staleness}.
\begin{table}[H]
\centering
\small
\begin{tabular}{lccc}
\toprule
Format & BF16 & FP8 & FP4 \\
\midrule
$P^{\mathrm{SR}}_{\mathrm{stall}}$ & $0.825$ & $0.989$ & $0.994$ \\
\bottomrule
\end{tabular}
\caption{Theoretical steady-state stalling probabilities under SR for $\beta_2=0.999$.}
\vspace{-1em}
\label{tab:sr-stall}
\end{table}
Thus SR materially helps at BF16, but at FP8 and FP4 the stalled regime
remains dominant. This analysis isolates unchanged-state probabilities only;
in practice SR may also help by reducing directional quantization bias
\cite{croci2022stochastic}.

\subsection{Periodic state resetting}
\label{sec:resets-rewrite}

When steady-state stalling is high, a quantized EMA eventually becomes nearly
frozen, so resetting the state is a natural way to restore responsiveness.
The transient analysis above shows, however, that this responsiveness is not
lost immediately: after each initialization there is a finite startup window
during which the EMA remains adaptive before stalling dominates. Since the
length of this window is governed primarily by the effective precision ratio
$\hat\rho$ in \Cref{def:rhohat}, it is largely stable throughout training for a
fixed precision regime. This both motivates periodic resets and suggests how
to time them: resets should occur once the startup window is largely exhausted,
but not so early that they discard useful averaging.

We compare two quantities over a cycle of length $K$: the accumulated
\emph{reset-sensitive} staleness, and the statistical benefit still left to
gain from continuing the EMA. Removing the reset-insensitive floor
$P_{\mathrm{init}}$, we define the normalized stalling progress
\begin{equation}
S(j)
:=
\frac{P_{\mathrm{total}}(j)-P_{\mathrm{init}}}
     {P_{\mathrm{total}}^{\mathrm{ss}}-P_{\mathrm{init}}}
=
\frac{P_{\mathrm{stall}}(j)}{P_{\mathrm{stall}}^{\mathrm{ss}}},
\label{eq:S-of-j-main}
\end{equation}
where $P_{\mathrm{total}}^{\mathrm{ss}}
=
P_{\mathrm{init}} + (1-P_{\mathrm{init}})\,P_{\mathrm{stall}}^{\mathrm{ss}}.
$
Rather than using the endpoint $S(K)$ alone, we average over the cycle and
count only the excess above a tolerance level $s_0$:
\begin{equation}
\bar S_{s_0}(K)
:=
\frac{1}{K}\sum_{j=1}^{K}
\left[
\frac{S(j)-s_0}{1-s_0}
\right]_+,
\label{eq:avgS-main}
\end{equation}
and compare it with the remaining statistical error of the bias-corrected EMA,
whose derivation is given in the Appendix \Cref{app:reset-heuristic},
\begin{equation}
E(K):=\frac{2\beta_2^K}{1+\beta_2^K}.
\label{eq:E-of-K}
\end{equation}

\begin{proposition}[Heuristic reset period]
\label{prop:restart-heuristic}
Define
\begin{equation}
K^\star
:=
\min\bigl\{
K\ge 1:\ \bar S_{s_0}(K)\ge E(K)
\bigr\}.
\label{eq:reset-heuristic-main}
\end{equation}
Using $P_{\mathrm{stall}}(j)
\approx
F_{\chi^2_1}\!\bigl((1+\hat\rho)(1-\beta_2^j)\bigr)$
and $P_{\mathrm{stall}}^{\mathrm{ss}}
\approx
F_{\chi^2_1}(1+\hat\rho)$
the condition in \eqref{eq:reset-heuristic-main} is a one-dimensional monotone
test in $K$. We use $s_0=0.6$ throughout.
\end{proposition}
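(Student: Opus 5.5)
Substituting the stated closed forms turns the claim into a monotonicity statement. The thing to prove is that $D(K):=\bar S_{s_0}(K)-E(K)$ changes sign at most once, from negative to nonnegative, as $K$ increases. Then $K^\star$ is the first zero crossing of $D$ and can be found by bisection or a linear scan. I will show that $\bar S_{s_0}$ is nondecreasing in $K$ and that $E$ is strictly decreasing.

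\emph{Step 1 (the error term).} Since $\beta_2\in(0,1)$, $x=\beta_2^K$ is strictly decreasing in $K$. The map $x\mapsto 2x/(1+x)$ is strictly increasing on $[0,1]$, with derivative $2/(1+x)^2>0$. Hence $E(K)$ is strictly decreasing, with $E(0)=1$ and $E(K)\to0$.

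\emph{Step 2 (the per-step staleness).} By \Cref{prop:transient-stall}, $S(j)=F_{\chi^2_1}\bigl((1+\hat\rho)(1-\beta_2^j)\bigr)/F_{\chi^2_1}(1+\hat\rho)$. Three facts follow.
\begin{itemize}
\item $1-\beta_2^j$ is increasing in $j$ and $F_{\chi^2_1}$ is increasing, so $S(j)$ is nondecreasing.
\item $S(j)\in[0,1)$.
\item $S(j)\to1$ as $j\to\infty$.
\end{itemize}
The same properties transfer to the clipped term $a_j:=[(S(j)-s_0)/(1-s_0)]_+$, because $y\mapsto[(y-s_0)/(1-s_0)]_+$ is nondecreasing. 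Thus $a_j$ is nondecreasing with $a_j\to1$.

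\emph{Step 3 (the Cesàro average).} The average $\bar S_{s_0}(K)=\frac1K\sum_{j\le K}a_j$ of a nondecreasing sequence is nondecreasing. Indeed,
\[
\bar S_{s_0}(K+1)-\bar S_{s_0}(K)=\frac{a_{K+1}-\bar S_{s_0}(K)}{K+1}\ge0,
\]
because $a_{K+1}\ge a_j$ for every $j\le K$. Also $\bar S_{s_0}(K)\to1$.

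\emph{Step 4 (conclusion).} Combining Steps 1 and 3, $D$ is strictly increasing in $K$. As $K\to\infty$, $D(K)\to 1-0=1>0$, so the set in \eqref{eq:reset-heuristic-main} is nonempty and $K^\star$ is finite. Once $D(K)\ge0$ it stays nonnegative, so the test is one-dimensional and monotone, and $K^\star$ is the unique crossing point.

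The main obstacle is Step 2's boundary behaviour rather than any hard analysis. If $\hat\rho$ is small enough that $S(j)\le s_0$ for many $j$, then $\bar S_{s_0}$ is identically zero on an initial range. Strict monotonicity of $E$ still handles that range. The residual concern is whether finiteness of $K^\star$ needs $\beta_2<1$, which it does and which is assumed.

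A separate point is the replacement of $P_{\mathrm{stall}}^{\mathrm{ss}}$ by $F_{\chi^2_1}(1+\hat\rho)$. This is consistent with \Cref{cor:nr-large-rho} for $\hat\rho\ge1$ and is exactly the limit of the transient formula. That consistency is what guarantees $S(j)\to1$, and I would verify it explicitly.
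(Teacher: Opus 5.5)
Your proposal is correct and takes essentially the same route as the paper's derivation (Step 5 of the reset-heuristic appendix). There, the left-hand side is the average of a nondecreasing clipped sequence tending to $1$, while $E(K)=2\beta_2^K/(1+\beta_2^K)$ is strictly decreasing to $0$, so the crossing exists and is unique; your explicit difference identity for the running average and your observation that $\bar S_{s_0}(K)-E(K)$ is strictly increasing simply spell out that one-line argument in full.
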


\begin{figure*}[t]
\centering
\includegraphics[width=0.95\linewidth]{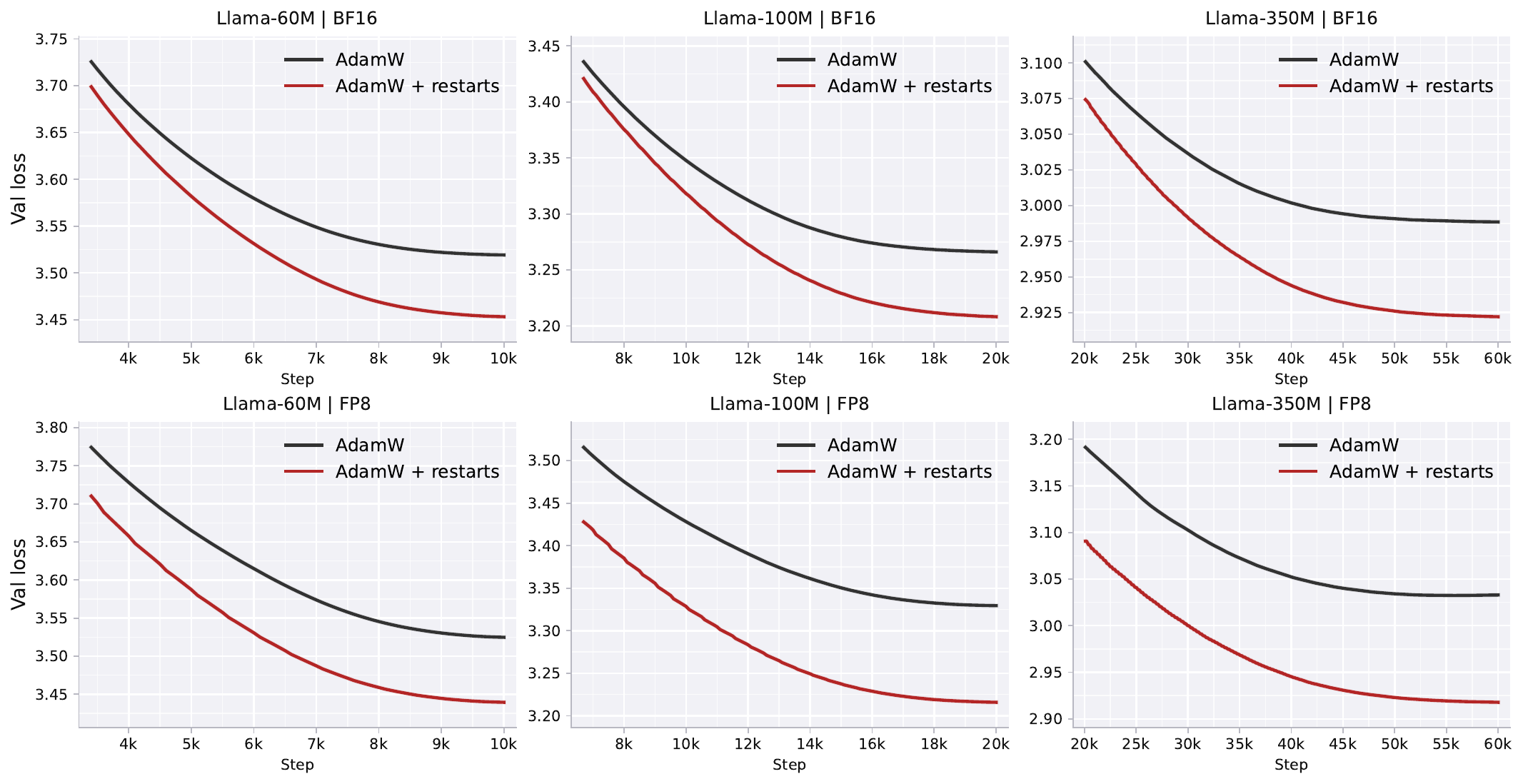}
\vspace{-1em}
\caption{Training loss curves for the 3 model scales under 2 different precisions}
\label{fig:training-curves}
\vspace{-1em}
\end{figure*}

Intuitively, we reset once the cycle spends enough time in a highly stalled
regime that the remaining benefit of further averaging is no longer worth the
cost. Solving \eqref{eq:reset-heuristic-main} for the precisions considered in
this work gives the periods in \Cref{tab:restart-periods-table}, which should be
viewed as selecting a good operating region rather than a uniquely optimal
period.

\begin{table}[H]
\centering
\small
\begin{tabular}{lccc}
\toprule
 & BF16 & FP8 & FP4 \\
\midrule
$K^\star$ & $\approx 1116$ & $\approx 320$ & $\approx 224$ \\
\bottomrule
\end{tabular}
\caption{Heuristic reset periods from \eqref{eq:reset-heuristic-main} with
$\beta_2=0.999$ and $s_0=0.6$.}
\vspace{-1em}
\label{tab:restart-periods-table}
\end{table}

Resetting the first moment is mainly useful in very aggressive regimes such as
FP4, since at higher precisions its steady-state stalling rate is often below
the initial post-reset level $P_{\mathrm{init}}$, as also shown in
\Cref{app:first_moment}.

\section{Experiments}

\subsection{Settings}
\label{sec:settings}

We follow the pre-training protocol of \cite{pmlr-v235-zhao24s,lialin2024relora}
on C4~\cite{raffel2020exploring} using the LLaMA architecture
family~\cite{llama1}, matching the small-scale GaLore setup for the 60M, 100M,
and 350M models. Concretely, the models are trained for 10K/20K/60K steps on
approximately 1.3B/2.6B/7.8B tokens, with maximum sequence length 256, token
batch size 131K, linear warmup over the first 10\% of training, cosine decay
to 10\% of the initial learning rate, and no data repetition. We evaluate
three optimizer-state precision regimes: BF16, FP8 using E4M3 for both
moments, and FP4 with the first moment in FP4 (E2M1) and the unsigned second
moment in FP4 (E2M2). For FP8 we use per-tensor scaling; for FP4 we use
block-wise scaling with block size 128 and exclude the zero point from the
second-moment grid, following \cite{4bit_states}. Hyperparameters are given in \Cref{app:training-details}.

For each regime, we compare vanilla AdamW against AdamW with stochastic
rounding and periodic EMA resets. Unless otherwise stated, the second-moment reset period is chosen using
\Cref{eq:reset-heuristic-main}, with $P_{\mathrm{init}}$ estimated
empirically from the stalled fraction at the first step after state
initialization and $s_0=0.6$. Although this heuristic is derived under nearest
rounding, we use it as a practical proxy under stochastic rounding as well:
SR lowers unchanged-state probabilities but preserves the same qualitative
dependence on precision and decay, so the resulting periods still identify a
good reset regime. This gives periods of $1000$ for BF16, $300$ for FP8, and
$200$ for FP4. The first moment is reset with the same period.

\subsection{Results}

We first evaluate whether periodic EMA resets improve training under quantized
optimizer-state storage across several practical low-precision regimes.
\Cref{tab:main-results} and \Cref{fig:training-curves} show that stochastic
rounding and periodic EMA resets consistently improve final performance in
BF16, FP8, and FP4. Although implementation choices such as scaling affect how
severe stalling becomes, they do not remove the underlying problem of staleness.

\subsection{Ablation Studies}

All ablations are conducted on the 100M model.

\paragraph{Sensitivity to the Reset Period.}
We vary the second-moment reset period over a wide range while keeping all
other hyperparameters fixed. \Cref{fig:period-ablation} shows that performance
is robust over a broad range, with the best results attained near the period
predicted by \Cref{eq:reset-heuristic-main}. This supports the view that the
useful reset window is governed by quantized EMA stalling.

\begin{table}[t]
\small
\centering
\begin{tabular}{lccc}
\toprule
Configuration & 60M & 100M & 350M \\
\midrule
BF16 AdamW & 3.52 & 3.27 & 2.99 \\
\rowcolor{gray!15}
BF16 AdamW + Resets & 3.45 & 3.21 & 2.92 \\
FP8 AdamW & 3.53 & 3.33 & 3.02 \\
\rowcolor{gray!15}
FP8 AdamW + Resets & 3.44 & 3.22 & 2.92 \\
FP4 AdamW & 3.52 & 3.32 & 3.03 \\
\rowcolor{gray!15}
FP4 AdamW + Resets & 3.48 & 3.26 & 2.97 \\
\bottomrule
\end{tabular}
\caption{Validation loss across model scales.}
\vspace{-1em}
\label{tab:main-results}
\end{table}

\begin{figure}[H]
\centering
\hspace{-0.7cm}
\includegraphics[width=0.9\linewidth]{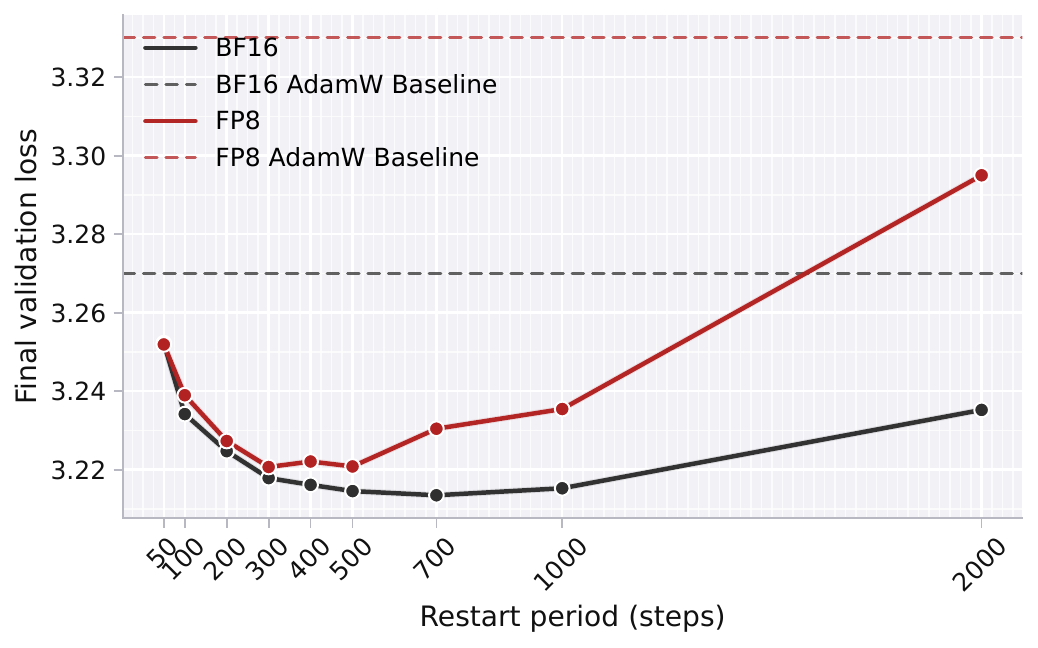}
\vspace{-1em}
\caption{Sensitivity to the reset period.}
\vspace{-1em}
\label{fig:period-ablation}
\end{figure}

\paragraph{Contribution of Different Components.}
We further isolate each mitigation component by keeping training otherwise in
FP32 and quantizing only the optimizer states. \Cref{tab:component-ablation}
shows that both stochastic rounding and resets help on their own in quantized
regimes, while their combination gives the best overall performance. By
contrast, applying the same reset schedule in full FP32 slightly hurts
performance, reinforcing our main claim that the benefit of resets is tied to
low-precision state staleness rather than to resets being universally useful.

\begin{table}[H]
\centering
\small
\begin{tabular}{lcccc}
\toprule
Configuration & FP32 & BF16 & FP8 & FP4 \\
\midrule
NR only & 3.155 & 3.181 & 3.193 & 3.199 \\
SR only & --    & 3.156 & 3.160 & 3.182 \\
Resets only & 3.161 & 3.158 & 3.159 & 3.166 \\
\rowcolor{gray!15}
SR + Resets & -- & 3.155 & 3.155 & 3.160 \\
\midrule
Memory savings & 0\% & 50\% & 75\% & 87.5\% \\
\bottomrule
\end{tabular}
\caption{Component ablation with FP32 training and quantized optimizer states
only. In the FP32 column, we apply resets only, with period 1000 for both
moments.}
\label{tab:component-ablation}
\end{table}

\paragraph{Asymmetric and Adaptive Resets.}
We next consider two extensions of the basic reset strategy: asymmetric reset
periods for the two moments, and an adaptive rule based on \Cref{prop:restart-heuristic} and detailed in
\Cref{sec:adaptive-restarts}. In the latter case, we set
$P_{\mathrm{stall}}^{\mathrm{ss}}=1$ and $s_0=0.6$, estimate
$P_{\mathrm{stall}}(j)$ online, and trigger a reset once
$\bar S_{s_0}(K)-E(K)>0$. As shown in \Cref{tab:adaptive-ablation}, using no
resets for the first moment and period 300 for the second slightly improves on
the symmetric baseline, consistent with the fact that in FP8 first-moment
steady-state stalling is typically the same as its initial floor
$P_{\mathrm{init}}$, see \Cref{app:first_moment}. The adaptive rule is comparable while avoiding a fixed
reset period.

\begin{table}[H]
\small
\centering
\begin{tabular}{lc}
\toprule
Reset Strategy & Validation Loss \\
\midrule
Symmetric Resets & 3.223 \\
\rowcolor{gray!15}
Asymmetric Resets & 3.212 \\
Adaptive Resets & 3.225 \\
\bottomrule
\end{tabular}
\caption{Different resetting policies under FP8.}
\vspace{-1em}
\label{tab:adaptive-ablation}
\end{table}

\paragraph{Extensions to Other EMA-Based Optimizers.}
Finally, we test the same reset strategy on SOAP, also resetting the EMA states of its left and right preconditioners using the same period as in AdamW and following the setup of \cite{wang20244}. \Cref{tab:soap-results} shows that resets again improve low-precision performance.

\begin{table}[H]
\small
\centering
\begin{tabular}{lcc}
\toprule
Configuration & BF16 & FP8 \\
\midrule
SOAP & 3.22 & 4.15 \\
\rowcolor{gray!15}
SOAP + Resets & 3.19 & 3.23 \\
\bottomrule
\end{tabular}
\caption{SOAP with and without EMA resets.}
\vspace{-1em}
\label{tab:soap-results}
\end{table}

\section{Conclusion}

Low-precision optimizer states do not just save memory; they change the
dynamics of optimization. Our central message is that quantized EMA states can
become effectively stale for long stretches of training, and that this
staleness is a useful lens for understanding both the failure modes of
low-precision training and the surprising effectiveness of state resets.
Viewed this way, resets are not merely a heuristic borrowed from successful
training recipes: they are a targeted way to restore responsiveness once the
optimizer state has stopped meaningfully evolving. Just as importantly, our
analysis suggests that good resets are not arbitrary, their timing matters.
We hope this perspective helps shift the discussion of low-precision optimizer
states from storage alone to dynamics, and motivates future work on optimizers
and quantization schemes designed explicitly to remain responsive under
quantization.

\newpage
\clearpage
\section{Limitations}

Our analysis is designed to isolate a specific and practically important mechanism, state-update stalling under quantized EMA storage, rather than to model every aspect of low-precision training. As a result, some components are intentionally approximate, including the local scalar analysis of the EMA dynamics and the heuristic used to choose reset periods. These simplifications still match the main empirical trends well in our experiments, but the exact quantitative behavior can depend on implementation choices such as scaling strategy, rounding mode, and optimizer variant. In addition, our empirical study focuses on the LLaMA pre-training settings and EMA-based optimizers considered here. Extending the analysis to broader training regimes, larger scales, and additional classes of stateful optimizers is a natural direction for future work.
\bibliography{custom}

\appendix
\twocolumn
\section{Appendix}

\subsection{Floating-point grid spacing and the effective precision ratio}
\label{app:ulp-geometry}

For a normalized floating-point number,
\[
x = m\,2^e,
\qquad m\in[1,2),\quad e\in\mathbb{Z},
\]
the spacing between adjacent representable numbers within a fixed exponent
binade is
\[
\operatorname{ulp}(x)=2^{e-p},
\]
where $p$ is the number of stored mantissa bits. Hence
\begin{equation}
\operatorname{ulp}(x)=2^{e-p}=\frac{2^{-p}}{m}\,x,
\label{eq:ulp}
\end{equation}
so the exact local relative spacing is
\[
\frac{\operatorname{ulp}(x)}{x}=\frac{2^{-p}}{m}.
\]

In the main text we do not carry this binade- and mantissa-dependent quantity
through all formulas. Instead, we summarize each format by a single parameter
$\eps$, writing
\[
\operatorname{ulp}(x)\approx \frac{\eps}{m}\,x.
\]
For the normalized regimes considered here, we take $\eps=2^{-p}$; in
particular, $\eps=2^{-7}$ for BF16, $\eps=2^{-3}$ for E4M3 FP8, and
$\eps=2^{-2}$ for the unsigned E2M2 FP4 format used for the second moment.

In our setting, the current EMA state is $v_{t-1}$ and the high precision proposal is
\[
v_t=\beta_2 v_{t-1}+(1-\beta_2)g_t^2.
\]
Hence the update increment is
\[
\Delta_t:=v_t-v_{t-1}
=(1-\beta_2)(g_t^2-v_{t-1}).
\]
Writing $v_{t-1}=m_t2^{e_t}$ and using \eqref{eq:ulp}, the local spacing around
the stored value is approximated by
\[
u_t=\operatorname{ulp}(v_{t-1}^q)\approx \operatorname{ulp}(v_{t-1})
=\frac{\eps}{m_t}v_{t-1}.
\]
Under nearest rounding, state-update stalling occurs whenever
$|\Delta_t|<u_t/2$; equivalently,
\[
|\Delta_t|\ge \frac{u_t}{2}
\]
is required for the stored value to change. Therefore
\begin{equation}
\frac{|\Delta_t|}{u_t}
=
\frac{(1-\beta_2)|g_t^2-v_{t-1}|}{(\eps/m_t)\,v_{t-1}}.
\label{eq:delta-over-u-app-pre}
\end{equation}

Under the local-stationarity approximation $v_{t-1}\approx \sigma^2$, and with
\[
z_t:=\frac{g_t^2}{\sigma^2}\sim\chi_1^2,
\]
this becomes
\begin{equation}
\frac{|\Delta_t|}{u_t}
\approx
\frac{(1-\beta_2)m_t|z_t-1|}{\eps}.
\label{eq:delta-over-u-app}
\end{equation}
Hence the nearest-rounding gate $|\Delta_t|\ge u_t/2$ is equivalent to
\begin{equation}
|z_t-1|\ge \frac{\eps}{2(1-\beta_2)m_t}.
\label{eq:threshold-m}
\end{equation}

The threshold depends on the normalized mantissa $m_t$, which varies across
coordinates. To reduce this dependence to a single scalar, we adopt a
log-uniform mantissa model and treat $\log_2 m_t$ as approximately uniform on
$[0,1]$, corresponding to
\[
f(m)=\frac{1}{m\ln 2},
\qquad m\in[1,2].
\]
Under this model,
\begin{equation}
\bar m:=\mathbb{E}[m]
=\int_1^2 m\cdot\frac{1}{m\ln 2}\,dm
=\frac{1}{\ln 2}.
\label{eq:mbar}
\end{equation}
We then replace the coordinate-dependent threshold in \eqref{eq:threshold-m} by
its single-scalar surrogate obtained from $\bar m$. This is an
effective-spacing approximation rather than an exact averaging of the stalling
probability, and it is used only to obtain the compact predictor $\hat\rho$:
\begin{equation}
\hat\rho:=\frac{\eps}{2(1-\beta_2)\bar m},
\qquad
\bar m=\frac{1}{\ln 2}.
\label{eq:rhohat-app}
\end{equation}
Thus the gate is approximated by $|z_t-1|\ge \hat\rho$, or equivalently
\[
\frac{|\Delta_t|}{u_t}\approx \frac{|z_t-1|}{2\hat\rho}.
\]

\subsection{NR stalling probability}
\label{app:nr}

We derive \Cref{prop:nr-stall}. Under nearest rounding (NR), the stored state
stalls if and only if the high precision proposal rounds back to the current stored
value. Under the hard-gate approximation from the main text,
\[
|\Delta_t| < \frac{u_t}{2}
\quad\Longleftrightarrow\quad
|z_t-1|<\hat\rho,
\qquad z_t\sim\chi^2_1.
\]
Equivalently, the stored value changes only when $|z_t-1|\ge \hat\rho$.
Since $z_t\ge 0$, the stalling region is
\[
z_t \in \bigl(\max(0,\,1-\hat\rho),\,1+\hat\rho\bigr).
\]
Therefore,
\begin{align}
P^{\mathrm{NR}}_{\mathrm{stall}}(\hat\rho)
&=
\mathbb{P}\!\left(\max(0,1-\hat\rho) < z_t < 1+\hat\rho\right)\\
&=
F_{\chi^2_1}(1+\hat\rho)-F_{\chi^2_1}(\max(0,1-\hat\rho)).
\label{eq:nr-stall-app}
\end{align}

\paragraph{Useful closed form for the $\chi^2_1$ CDF.}
If $z=X^2$ with $X\sim\mathcal{N}(0,1)$, then for $x\ge 0$,
\[
F_{\chi^2_1}(x)=\mathbb{P}(|X|\le \sqrt{x})
=2\Phi(\sqrt{x})-1.
\]
This identity is convenient for numerical evaluation and inversion of the
stalling thresholds.
\subsection{SR stalling probability}
\label{app:sr}

We derive \Cref{prop:sr-stall}. Fix a grid with spacing $u$ around the current
stored value $\vbar$, and suppose the higher-precision proposal is
$\vtil=\vbar+\delta$. Under stochastic rounding (SR), if $\vtil$ lies between
$\vbar$ and an adjacent grid point at distance $u$, then the probability of
rounding back to $\vbar$ is proportional to proximity:
\[
\mathbb{P}\!\bigl(\Q_{\mathrm{SR}}(\vtil)=\vbar \mid \delta\bigr)
=
\begin{cases}
1-|\delta|/u, & |\delta|<u,\\
0, & |\delta|\ge u.
\end{cases}
\]
At steady state, writing $\alpha:=1-\beta_2$, using
\[
\delta \approx \alpha \vbar(z_t-1),
\qquad
u\approx (\eps/\bar m)\vbar,
\qquad
z_t\sim\chi^2_1,
\]
gives
\[
\frac{|\delta|}{u}
\approx
\frac{\alpha \bar m}{\eps}|z_t-1|
=
\frac{|z_t-1|}{2\hat\rho}.
\]
Hence the conditional SR stalling probability is
\begin{equation}
p^{\mathrm{SR}}_{\mathrm{stall}}(z_t;\hat\rho)
=
\max\!\Bigl(0,\;1-\frac{|z_t-1|}{2\hat\rho}\Bigr),
\label{eq:sr-stall-cond-app}
\end{equation}
and averaging over $z_t$ yields
\begin{equation}
P^{\mathrm{SR}}_{\mathrm{stall}}(\hat\rho)
=
\mathbb{E}_{z_t\sim\chi^2_1}
\left[
\max\!\Bigl(0,\;1-\frac{|z_t-1|}{2\hat\rho}\Bigr)
\right].
\label{eq:sr-stall-app}
\end{equation}

\paragraph{Large-$\hat\rho$ approximation.}
When $\hat\rho$ is large, the truncation at $0$ is rarely active, since
$|z_t-1|$ is typically $O(1)$. Dropping the truncation gives
\[
P^{\mathrm{SR}}_{\mathrm{stall}}(\hat\rho)
\approx
1-\frac{1}{2\hat\rho}\,\mu_1,
\qquad
\mu_1 := \mathbb{E}[|z_t-1|].
\]
For $z_t=X^2$ with $X\sim\mathcal{N}(0,1)$, $\mu_1$ has a closed form.

\begin{lemma}[Closed form for $\mu_1$]
\label{lem:mu1}
If $X\sim\mathcal{N}(0,1)$ and $z_t=X^2$, then
\[
\mu_1 = \mathbb{E}[|X^2-1|] = 4\phi(1)
= \frac{4}{\sqrt{2\pi}}e^{-1/2}
\approx 0.9679,
\]
where $\phi$ is the standard normal density.
\end{lemma}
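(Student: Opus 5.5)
The plan is a direct computation of $\mathbb{E}[|X^2-1|]$. Two facts drive it. First, $\mathbb{E}[X^2-1]=0$, so the positive and negative parts of $X^2-1$ have equal expectation, and therefore
\[
\mathbb{E}[|X^2-1|]=2\,\mathbb{E}\bigl[(1-X^2)\mathbf{1}\{|X|<1\}\bigr].
\]
This reduces the problem to a finite integral over $[-1,1]$ instead of splitting into two regions and handling the tail separately. By symmetry of $\phi$, the quantity becomes $4\int_0^1(1-x^2)\phi(x)\,dx$.

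Second, the standard Gaussian identity $\phi'(x)=-x\phi(x)$ gives
\[
\frac{d}{dx}\bigl(x\phi(x)\bigr)=\phi(x)-x^2\phi(x)=(1-x^2)\phi(x).
\]
So $(1-x^2)\phi(x)$ has the antiderivative $x\phi(x)$, and
\[
\int_0^1(1-x^2)\phi(x)\,dx=\bigl[x\phi(x)\bigr]_0^1=\phi(1).
\]
Multiplying by $4$ gives $\mu_1=4\phi(1)=\tfrac{4}{\sqrt{2\pi}}e^{-1/2}$. Evaluating $e^{-1/2}\approx 0.60653$ and $\sqrt{2\pi}\approx 2.50663$ yields $\mu_1\approx 0.9679$.

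The only point that needs care is justifying the first reduction, and it is routine. Write $|Y|=2Y^- + Y$ with $Y=X^2-1$. Because $X^2\sim\chi^2_1$ has mean $1$, we get $\mathbb{E}[Y]=0$, and $Y$ is integrable since $X$ has a finite second moment. The antiderivative identity is then just a one-line product-rule check.

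As a cross-check, the complementary tail computation gives the same result: integrating by parts, $\int_1^\infty(x^2-1)\phi(x)\,dx=\bigl[-x\phi(x)\bigr]_1^\infty=\phi(1)$. This agrees with the equal-parts argument.
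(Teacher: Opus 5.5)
Your proposal is correct and follows essentially the same route as the paper: it uses symmetry and $\mathbb{E}[X^2-1]=0$ to reduce to $4\int_0^1(1-x^2)\phi(x)\,dx$, then applies $\phi'(x)=-x\phi(x)$ to evaluate this integral as $\phi(1)$. The only cosmetic difference is that you recognize $x\phi(x)$ directly as an antiderivative of $(1-x^2)\phi(x)$, whereas the paper integrates $x^2\phi(x)$ by parts and cancels the $\int_0^1\phi$ term.
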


\begin{proof}
By symmetry,
\begin{align*}
\mu_1 = \mathbb{E}[|X^2-1|]
&=2\int_0^\infty |x^2-1|\phi(x)\,dx\\
&=2\!\left(\int_0^1 (1-x^2)\phi(x)\,dx+\int_1^\infty (x^2-1)\phi(x)\,dx\right).    
\end{align*}

Since $\mathbb{E}[X^2-1]=0$, the positive and negative parts have equal mass:
\[
\int_1^\infty (x^2-1)\phi(x)\,dx
=
\int_0^1 (1-x^2)\phi(x)\,dx.
\]
Hence
\[
\mu_1 = 4\int_0^1 (1-x^2)\phi(x)\,dx.
\]
Using $\phi'(x)=-x\phi(x)$,
\begin{align*}
\int_0^1 x^2\phi(x)\,dx
&=
\bigl[-x\phi(x)\bigr]_0^1+\int_0^1 \phi(x)\,dx\\
&=
-\phi(1)+\int_0^1\phi(x)\,dx,    
\end{align*}
so
\[
\int_0^1 (1-x^2)\phi(x)\,dx=\phi(1),
\]
which gives $\mu_1=4\phi(1)$.
\end{proof}

This subsection isolates the unchanged-state probability under SR. It does not
attempt to capture the full benefit of SR, which may also arise from the
conditional unbiasedness of the quantization error even on steps where the
stored state changes.

\subsection{Empirical initial floor $P_{\mathrm{init}}$}
\label{app:init-floor}

Empirically, the stalled fraction immediately after initialization is already
nonzero. In the main text we therefore treat $P_{\mathrm{init}}$ as an empirical
correction; here we explain why such a floor is expected and why it depends only
weakly on model scale.

At the first update after a zero-initialized second moment, the higher-precision
proposal is
\[
\tilde v_1 = \alpha g_1^2,
\qquad \alpha := 1-\beta_2.
\]
A coordinate remains at its initialized stored value in two main ways:
(i) the corresponding gradient entry is exactly zero, and
(ii) the proposal is nonzero in higher precision but is mapped back to the
initialized quantized value by low-precision storage. Let
\[
p_{\mathrm{zero}} := \Pr(g_{1,i}=0)
\]
denote the fraction of coordinates with exactly zero gradient at the first
step. This contribution is architecture- and data-dependent, but largely
format-independent.

The second contribution depends on the quantization scheme. Consider a scaling
group of size $B$ (the full tensor under per-tensor scaling, or one block under
block-wise scaling). If the maximum proposal magnitude in the group is mapped to
the format maximum $x_{\max}$, then coordinate $i$ is represented by
\[
x_i^{\mathrm{work}}
=
\frac{g_i^2}{\max_{j\le B} g_j^2}\,x_{\max}.
\]
Under nearest rounding, a nonzero coordinate is mapped back to the initialized
stored value whenever
\begin{equation}
\frac{g_i^2}{\max_{j\le B} g_j^2} < \tau,
\qquad
\tau := \frac{s_{\min}}{2x_{\max}},
\label{eq:tau-crush}
\end{equation}
where $s_{\min}$ is the smallest positive representable spacing of the format.
For stochastic rounding, the hard threshold is replaced by the corresponding
soft gate, so the probability of returning to the initialized value is smaller
but controlled by the same scale ratio.

To obtain a simple approximation, assume that within one scaling group
\[
\frac{g_i^2}{\sigma^2}\overset{d}{\approx}\chi_1^2.
\]
If the group maximum is replaced by a typical upper order statistic
\[
M_B := F_{\chi^2_1}^{-1}\!\left(1-\frac{1}{B}\right),
\]
then the fraction of nonzero coordinates crushed back to the initialized value
under nearest rounding is approximately
\begin{equation}
f_{\mathrm{crush}}^{\mathrm{NR}}(\tau,B)
\approx
F_{\chi^2_1}\!\bigl(\tau M_B\bigr).
\label{eq:f-crush}
\end{equation}
This is a heuristic approximation: it replaces the random group maximum by a
typical high quantile. For stochastic rounding, the analogous crush fraction is
obtained by averaging the soft gate over the same distribution, and is
therefore no larger than the NR value.

Combining exact zeros with quantization-induced crushing gives
\begin{equation}
P_{\mathrm{init}}
\approx
p_{\mathrm{zero}} + (1-p_{\mathrm{zero}})\,f_{\mathrm{crush}}(\tau,B).
\label{eq:Pinit}
\end{equation}
This explains two qualitative features seen in practice. First, BF16 typically
has little additional crushing beyond structural zeros, whereas narrower-range
formats can exhibit a much larger initial floor. Second, the dependence on
model size is weak: under per-tensor scaling it enters only through the typical
order statistic $M_B$, which grows only logarithmically in $B$, while under
fixed block-wise scaling it is governed primarily by the block size itself.

Because subsequent stalling slows the growth of the EMA, the ideal predictor
$j^\star_{\mathrm{ideal}}$ can still rise somewhat too quickly after
initialization. Rather than introducing a fully self-consistent transient
model, in the main text we account for the dominant effect through the
phenomenological correction
\[
P_{\mathrm{total}}(j)
=
P_{\mathrm{init}} + (1-P_{\mathrm{init}})\,P_{\mathrm{stall}}(j),
\]
which preserves the closed-form startup-window expression while incorporating
the empirical post-initialization floor.

\subsection{Deriving the startup window length $j^\star(P_0)$}
\label{app:jstar}

We derive the transient stalling law under an \emph{ideal-trajectory
approximation}. After initialization (or reset) to zero, the unquantized EMA
evolves as
\[
v_j = \beta_2 v_{j-1} + \alpha g_j^2,
\qquad
v_0=0,
\qquad
\alpha:=1-\beta_2.
\]
Over the short window relevant to the startup analysis, we assume that a single
gradient coordinate has an approximately fixed local scale $\sigma$, and use
\[
\frac{g_j^2}{\sigma^2}\overset{d}{\approx}\chi_1^2.
\]
In particular, this implies
\[
\mathbb{E}[g_j^2]\approx \sigma^2,
\]
which we use only to motivate a deterministic reference trajectory. By linearity
of expectation,
\begin{align*}
    \mathbb{E}[v_j]
=
\alpha \sum_{k=1}^j \beta_2^{\,j-k}\,\mathbb{E}[g_k^2]
&\approx
\alpha \sigma^2 \sum_{k=1}^j \beta_2^{\,j-k}\\
&=
\sigma^2(1-\beta_2^j).
\end{align*}

This motivates the reference path
\begin{equation}
\bar v_j := \sigma^2 \phi_j,
\qquad
\phi_j := 1-\beta_2^j.
\label{eq:vbarj-app}
\end{equation}

We index $j$ as the number of post-initialization updates already accumulated
in the EMA. Thus the \emph{next} squared-gradient update is tested against the
current stored state $\bar v_j$. Define
\[
z:=\frac{g^2}{\sigma^2}\sim \chi^2_1,
\qquad
W:=\frac{g^2}{\bar v_j}=\frac{z}{\phi_j}.
\]
Under nearest rounding (NR), stalling occurs when the innovation is smaller than
half a ulp of the current state, i.e.
\[
|W-1|<\hat\rho.
\]
Equivalently,
\[
-\hat\rho < \frac{z}{\phi_j}-1 < \hat\rho
\Longleftrightarrow
\phi_j(1-\hat\rho) < z < \phi_j(1+\hat\rho).
\]
Therefore
\begin{equation}
P_{\mathrm{stall}}^{\mathrm{NR}}(j)
=
F_{\chi^2_1}\!\bigl(\phi_j(1+\hat\rho)\bigr)
-
F_{\chi^2_1}\!\bigl(\max\{0,\phi_j(1-\hat\rho)\}\bigr).
\label{eq:pstall-transient-general-app}
\end{equation}
For the formats of interest in the main text, $\hat\rho\ge 1$, so the lower
endpoint is nonpositive and
\begin{equation}
P_{\mathrm{stall}}^{\mathrm{NR}}(j)
=
F_{\chi^2_1}\!\bigl((1+\hat\rho)\phi_j\bigr).
\label{eq:pstall-transient-app}
\end{equation}

\paragraph{Conservativeness of the ideal-trajectory approximation.}
More generally, if the current state scale is $v=\sigma^2\phi$, then the same
derivation gives
\[
P_{\mathrm{stall}}^{\mathrm{NR}}(\phi)
=
F_{\chi^2_1}\!\bigl((1+\hat\rho)\phi\bigr),
\qquad (\hat\rho\ge 1),
\]
which is monotone increasing in $\phi$. Therefore, whenever the quantized EMA
lags behind the ideal reference trajectory, its instantaneous stalling
probability is smaller than the ideal-trajectory prediction at the same step.
This does not yield a pathwise bound for the full quantized process, since
nearest rounding can also perturb successful updates upward, but it explains
why the ideal-trajectory predictor is typically conservative early in the
startup phase.

\paragraph{Why the naive substitution is incorrect.}
A tempting shortcut is to replace $\hat\rho$ by the scaled quantity
$\hat\rho_j=\hat\rho\,\phi_j$ inside the steady-state formula. This gives a
stalling interval centered at $z=1$ for every $j$, namely
\[
1-\hat\rho\phi_j < z < 1+\hat\rho\phi_j,
\]
which is correct only at steady state ($\phi_j=1$). During the transient, the
relevant ratio is $W=z/\phi_j$, so the correct interval is instead centered at
the \emph{current state scale} $z=\phi_j$, as in
\eqref{eq:pstall-transient-general-app}. This distinction matters especially for
small $\phi_j$, because the $\chi^2_1$ density is largest near zero.

\paragraph{Closed-form startup window.}
Fix a tolerance $P_0\in(0,1)$ and define
\[
j^\star(P_0):=\min\{j:\;P_{\mathrm{stall}}^{\mathrm{NR}}(j)\ge P_0\}.
\]
When $\hat\rho\ge 1$, using \eqref{eq:pstall-transient-app},
\[
F_{\chi^2_1}\!\bigl((1+\hat\rho)\phi_j\bigr)\ge P_0
\Longleftrightarrow
(1+\hat\rho)\phi_j \ge F_{\chi^2_1}^{-1}(P_0).
\]
Hence
\[
\phi_j \ge \phi^\star(P_0),\ 
\phi^\star(P_0):=\frac{F_{\chi^2_1}^{-1}(P_0)}{1+\hat\rho}.
\]
Since $\phi_j=1-\beta_2^j$ is monotone increasing in $j$, the smallest such $j$
is
\begin{equation}
j^\star(P_0)
=
\left\lceil
\frac{\log\!\bigl(1-\phi^\star(P_0)\bigr)}{\log\beta_2}
\right\rceil,
\
\phi^\star(P_0)=\frac{F_{\chi^2_1}^{-1}(P_0)}{1+\hat\rho},
\label{eq:jstar-app}
\end{equation}
defined whenever $\phi^\star(P_0)<1$, i.e.
\[
F_{\chi^2_1}^{-1}(P_0)<1+\hat\rho.
\]

\paragraph{Including the empirical initial floor.}
The closed-form expression above captures only the precision-induced transient.
In practice, the measured stalled fraction also contains an empirical floor
$P_{\mathrm{init}}$, due to effects not modeled by the idealized scalar EMA
analysis, including structural zeros in the gradient and, for narrow-range
formats, dynamic-range crushing under scaled quantization. We therefore use the
phenomenological mixture model
\begin{equation}
P_{\mathrm{total}}(j)
=
P_{\mathrm{init}} + (1-P_{\mathrm{init}})\,P_{\mathrm{stall}}^{\mathrm{NR}}(j).
\label{eq:Ptotal-app}
\end{equation}
For a target $P_0$, this is equivalent to replacing $P_0$ by the effective target
\begin{equation}
P_0^{\mathrm{eff}}
=
\frac{P_0-P_{\mathrm{init}}}{1-P_{\mathrm{init}}},
\label{eq:P0eff-app}
\end{equation}
with the convention that $j^\star(P_0)=0$ whenever $P_0\le P_{\mathrm{init}}$.
Thus the startup window reported in the main text should be interpreted as a
theory-plus-floor predictor rather than a fully first-principles quantity:
\[
j^\star(P_0)
=
j^\star_{\mathrm{ideal}}\!\bigl(P_0^{\mathrm{eff}}\bigr).
\]

\begin{table}[H]
\centering
\small
\caption{Startup window under nearest rounding for $\beta_2=0.999$ after
incorporating the empirical floor $P_{\mathrm{init}}$. Theory uses
\eqref{eq:jstar-corrected-main}--\eqref{eq:P0eff-main}; empirical values are
measured in LLM pretraining. ``$\mathbf{0}$'' means the target is already met
immediately because $P_0\le P_{\mathrm{init}}$. ``--'' means the threshold was
not reached empirically within the observation window.}
\label{tab:jstar-main}
\begin{tabular}{l cccc}
\toprule
 & BF16 & FP8 & FP4 \\
\midrule
$P_{\mathrm{init}}$
& $0.17$
& $0.53$
& $0.97$ \\[3pt]

$j^\star(P_0{=}0.5)$
& $76$ / $\mathbf{50}$
& $\mathbf{0}$ / $\mathbf{0}$
& $\mathbf{0}$ / $\mathbf{0}$ \\[3pt]

$j^\star(P_0{=}0.8)$
& $464$ / $\mathbf{360}$
& $15$ / $\mathbf{15}$
& $\mathbf{0}$ / $\mathbf{0}$ \\[3pt]

$j^\star(P_0{=}0.9)$
& $1051$ / $\mathbf{700}$
& $36$ / $\mathbf{43}$
& $\mathbf{0}$ / $\mathbf{0}$ \\[3pt]

$j^\star(P_0{=}0.95)$
& $3044$ / --
& $61$ / $\mathbf{79}$
& $\mathbf{0}$ / $\mathbf{0}$ \\
\bottomrule
\end{tabular}
\end{table}

\paragraph{Startup-window values.}
\Cref{tab:jstar-main} gives the corresponding numerical startup-window values.
The main trend is consistent with \Cref{fig:staleness}: BF16 retains the
longest responsive window, FP8 remains responsive only over a much shorter
range, and in FP4 the \emph{total} stalled fraction already exceeds practical
thresholds immediately after reset once the empirical floor $P_{\mathrm{init}}$
is incorporated.

\begin{figure*}[t]
    \centering
    \includegraphics[width=\linewidth]{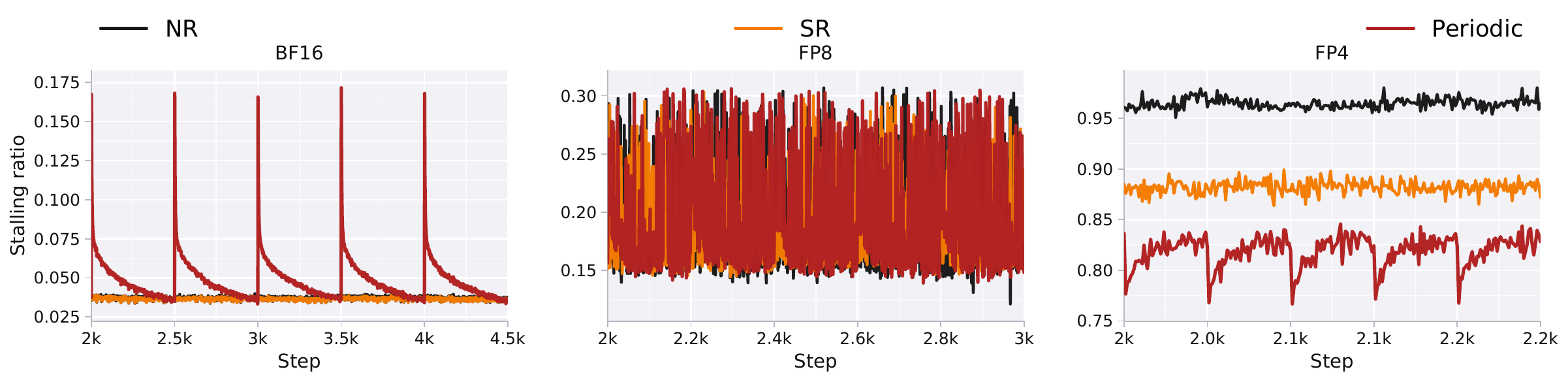}
    \vspace{-2em}
    \caption{Measured stalling fraction for the first moment during pretraining of the 100M-parameter LLaMA model on C4.}
    \vspace{-1em}
    \label{fig:first_moment_staleness}
\end{figure*}

\subsection{Empirical observations of stalling in the first moment}
\label{app:first_moment}

\Cref{fig:first_moment_staleness} shows that the first moment exhibits the same
basic qualitative pattern as the second: the stalled fraction is already
nonzero after initialization and then quickly approaches a steady regime.
What differs is the severity. In BF16, the steady-state stalled fraction is
very small, confirming that first-moment stalling is largely negligible in
this regime. Moreover, the initial floor $P_{\mathrm{init}}$ is slightly higher
than the steady-state level, which explains the mild non-monotonic behavior and
suggests that repeatedly resetting the first moment is not especially useful.
In FP8, the steady-state stalled fraction is still relatively low, and is close
to the initial floor; accordingly, resetting the first moment has little effect,
either positive or negative. In FP4, by contrast, the steady-state stalled
fraction becomes substantially larger and clearly harmful, while remaining above
$P_{\mathrm{init}}$, so resetting can recover part of the performance lost to
persistent first-moment stalling. Finally, as in the second moment, stochastic
rounding lowers the steady-state stalled fraction, but its practical importance
depends strongly on the underlying precision regime.

\subsection{Derivation of the reset-period heuristic}
\label{app:reset-heuristic}

This section derives the theory-guided heuristic used in
\Cref{sec:resets-rewrite}. The goal is not to identify a uniquely optimal
reset period, but rather to pre-select a useful operating region: long enough
to let the EMA accumulate statistical information, but not so long that the
\emph{reset-sensitive} part of stalling dominates a substantial portion of the
cycle. Compared to a purely endpoint-based rule, the heuristic below
incorporates two empirical facts: (i) moderate second-moment staleness is often
harmless, and (ii) good reset periods typically form a broad window rather than
a sharp optimum.

\paragraph{Step 1: normalize the controllable stalling buildup.}
After a reset, the total measured stalled fraction at step $j$ is modeled as
\[
P_{\mathrm{total}}(j)
=
P_{\mathrm{init}} + (1-P_{\mathrm{init}})\,P_{\mathrm{stall}}(j),
\]
where $P_{\mathrm{init}}$ is the baseline floor and $P_{\mathrm{stall}}(j)$ is
the precision-induced stalling on the responsive subset. Since
$P_{\mathrm{init}}$ is already present immediately after reset and is not
removed by resetting, it should not determine the reset period. We therefore
define the normalized stalling progress
\begin{equation}
S(j)
:=
\frac{P_{\mathrm{total}}(j)-P_{\mathrm{init}}}
     {P_{\mathrm{total}}^{\mathrm{ss}}-P_{\mathrm{init}}}.
\label{eq:Sj-def}
\end{equation}
Using
\[
P_{\mathrm{total}}^{\mathrm{ss}}
=
P_{\mathrm{init}} + (1-P_{\mathrm{init}})\,P_{\mathrm{stall}}^{\mathrm{ss}},
\]
this simplifies to
\begin{equation}
S(j)
=
\frac{P_{\mathrm{stall}}(j)}{P_{\mathrm{stall}}^{\mathrm{ss}}}.
\label{eq:Sj-simplified}
\end{equation}
Under the transient model, $S(j)$ starts at $0$ and approaches $1$ as
$j\to\infty$. The key point is that this normalization removes the
reset-insensitive floor $P_{\mathrm{init}}$ and isolates the part of stalling
that accumulates within a cycle and can be mitigated by resetting.

\paragraph{Step 2: measure cycle-averaged excess staleness.}
A rule based only on the endpoint $S(K)$ is often too conservative: it
penalizes a cycle as soon as the \emph{last} step becomes highly stalled, even
if much of the cycle remains in a regime where training is still robust. To
reflect cumulative exposure to harmful staleness, we average over the cycle and
count only the excess above a tolerance level $s_0\in[0,1)$:
\begin{align}
\bar S_{s_0}(K)
&:=
\frac{1}{K}
\sum_{j=1}^{K}
\left[
\frac{S(j)-s_0}{1-s_0}
\right]_+,\label{eq:avgS-threshold}
\\
[x]_+&:=\max(x,0).
\end{align}
This quantity is $0$ when the entire cycle stays below the tolerated staleness
level $s_0$, and approaches $1$ only when a substantial fraction of the cycle
is spent near fully saturated controllable stalling. Thus $s_0$ encodes the
empirically observed tolerance to moderate staleness: larger $s_0$ delays the
point at which staleness begins to count against a longer cycle.

\paragraph{Step 3: quantify the remaining statistical error of the EMA.}
Consider the bias-corrected EMA after $K$ steps, where the local age of the EMA
is reset at the reset boundary together with the moment state itself. Its
normalized weights are
\[
w_t^{(K)}
=
\frac{(1-\beta_2)\beta_2^{K-t}}{1-\beta_2^K},
\qquad t=1,\dots,K.
\]
The corresponding effective sample size is
\begin{equation}
N_{\mathrm{stat}}(K)
=
\frac{1}{\sum_{t=1}^K (w_t^{(K)})^2}
=
\frac{(1+\beta_2)(1-\beta_2^K)}
     {(1-\beta_2)(1+\beta_2^K)}.
\label{eq:Nstat}
\end{equation}
Its infinite-horizon limit is
\[
N_{\mathrm{stat}}^{\infty}
=
\frac{1+\beta_2}{1-\beta_2}.
\]
We define the normalized remaining statistical error as
\begin{equation}
E(K)
:=
1-\frac{N_{\mathrm{stat}}(K)}{N_{\mathrm{stat}}^{\infty}}
=
\frac{2\beta_2^K}{1+\beta_2^K}.
\label{eq:EK}
\end{equation}
This quantity decreases monotonically from $1$ to $0$ as the EMA accumulates
samples, and measures how much averaging benefit is still left to gain by
extending the current cycle.

\paragraph{Step 4: define the crossing heuristic.}
We choose the reset period as the first cycle length for which the
cycle-averaged excess staleness matches the remaining statistical error:
\begin{equation}
K^\star
:=
\min\bigl\{
K\ge 1:\ \bar S_{s_0}(K)\ge E(K)
\bigr\}.
\label{eq:crossing-heuristic}
\end{equation}
This yields a direct tradeoff. Before the crossing, the EMA is still gaining
substantial statistical accuracy relative to the harmful staleness accumulated
within the cycle. After the crossing, the additional gain from a longer cycle
is increasingly outweighed by the time spent in a highly stalled regime.

\paragraph{Step 5: substitute the transient stalling law.}
For the precision regimes considered in the main text, $\hat\rho\ge 1$, so the
transient NR approximation from \Cref{app:jstar} gives
\[
P_{\mathrm{stall}}(j)
\approx
F_{\chi^2_1}\!\bigl((1+\hat\rho)(1-\beta_2^j)\bigr),
\
P_{\mathrm{stall}}^{\mathrm{ss}}
\approx
F_{\chi^2_1}(1+\hat\rho).
\]
Substituting into \eqref{eq:Sj-simplified} yields
\begin{equation}
S(j)
\approx
\frac{
F_{\chi^2_1}\!\bigl((1+\hat\rho)(1-\beta_2^j)\bigr)
}{
F_{\chi^2_1}(1+\hat\rho)
}.
\label{eq:Sj-transient}
\end{equation}
Therefore the cycle-averaged excess staleness becomes
\begin{equation}
\bar S_{s_0}(K)
\approx
\frac{1}{K}
\sum_{j=1}^{K}
\left[
\frac{
\displaystyle
\frac{
F_{\chi^2_1}\!\bigl((1+\hat\rho)(1-\beta_2^j)\bigr)
}{
F_{\chi^2_1}(1+\hat\rho)
}
-s_0
}{
1-s_0
}
\right]_+.
\label{eq:avgS-transient}
\end{equation}
Combining \eqref{eq:avgS-transient} with \eqref{eq:EK}, the reset period is
the smallest integer $K$ such that
\begin{equation}
\frac{1}{K}
\sum_{j=1}^{K}
\left[
\frac{
\displaystyle
\frac{
F_{\chi^2_1}\!\bigl((1+\hat\rho)(1-\beta_2^j)\bigr)
}{
F_{\chi^2_1}(1+\hat\rho)
}
-s_0
}{
1-s_0
}
\right]_+
\ge
\frac{2\beta_2^K}{1+\beta_2^K}.
\label{eq:crossing-K}
\end{equation}
The left-hand side is nondecreasing in $K$, since it is the average of a
nondecreasing sequence, while the right-hand side is strictly decreasing and
tends to $0$. Moreover, because $S(j)\to 1$, the left-hand side tends to $1$.
Hence the crossing in \eqref{eq:crossing-K} exists and is unique.

\paragraph{Numerical values.}
For $\beta_2=0.999$ and the effective precision ratios used in the main text,
\[
\hat\rho_{\mathrm{BF16}}=2.71,\quad
\hat\rho_{\mathrm{FP8}}=43.3,\quad
\hat\rho_{\mathrm{FP4}}=86.6,
\]
we solve \eqref{eq:crossing-K} numerically. For the tolerance level used in the
main text, $s_0=0.6$, this gives
\[
K^\star_{\mathrm{BF16}}\approx 1116,\quad
K^\star_{\mathrm{FP8}}\approx 320,\quad
K^\star_{\mathrm{FP4}}\approx 224.
\]
Table~\ref{tab:reset-periods-thresholded} also reports nearby values of $s_0$
to show that the heuristic is stable across a reasonable range of tolerance
levels. Overall, the predicted reset periods lie in the same broad regime as
the empirically robust windows observed in our experiments, namely roughly
$500$--$1500$ for BF16, $300$--$600$ for FP8, and $50$--$250$ for FP4.

\begin{table}[H]
\centering
\scriptsize
\caption{Reset periods from the cycle-averaged thresholded heuristic
\eqref{eq:crossing-K} under NR with $\beta_2=0.999$. We report several
tolerance levels $s_0$ to show the sensitivity of the rule to the tolerated
amount of reset-sensitive staleness.}
\label{tab:reset-periods-thresholded}
\begin{tabular}{lcccccc}
\toprule
& & \multicolumn{3}{c}{$K^\star$ for tolerance level $s_0$} & \\
\cmidrule(lr){3-5}
Format & $\hat\rho$ & $0.5$ & $0.6$ & $0.7$ & Empirical window \\
\midrule
BF16 & $2.71$ & $\approx 1004$ & $\approx 1116$ & $\approx 1262$ & $500$--$1500$ \\
FP8  & $43.3$ & $\approx 295$  & $\approx 320$  & $\approx 351$  & $300$--$600$ \\
FP4  & $86.6$ & $\approx 206$  & $\approx 224$  & $\approx 246$  & $50$--$250$ \\
\bottomrule
\end{tabular}
\end{table}

In the main text we use $s_0=0.6$, which places the predicted periods near the
middle of the empirically robust range while remaining stable to moderate
changes in the tolerance level.

\subsection{Adaptive resetting policies}
\label{sec:adaptive-restarts}

While periodic resets are simple and effective, one can also design
\emph{adaptive} reset rules by monitoring stalling online during training.
This avoids committing to a fixed period in advance. Our adaptive rule follows
the same principle as the period heuristic in
\Cref{prop:restart-heuristic}, but evaluates it online rather than
precomputing a reset interval.

Concretely, at each iteration we estimate the empirical stalled fraction
$P_{\mathrm{stall}}(k)$ by comparing the newly quantized state with the
previously stored state before writeback. We then form the same
cycle-averaged excess staleness quantity used in the periodic heuristic, but
replace the model-based transient predictor by the observed stalled fraction.
In the adaptive version, we fix $P_{\mathrm{stall}}^{\mathrm{ss}}=1$ and
$s_0=0.6$, and trigger a reset once
\begin{equation}
\bar S_{s_0}(k) - E(k) > 0.
\end{equation}
Intuitively, this condition detects when the observed buildup of stalling
within the current cycle outweighs the remaining statistical benefit of
continuing the EMA.

Thus, the adaptive policy implements the same reset principle as the periodic
heuristic, but replaces a fixed reset interval with an online test based on
the actual behavior of the quantized EMA states. In practice, periodic resets
provide a simple default strategy, while adaptive resetting offers a lightweight
extension that can respond to the realized stalling dynamics during training.

\subsection{Experimental Setting}
\label{app:training-details}

We introduce the LLaMA architecture and training setup used in our experiments.
All models use a maximum sequence length of 256, a global token batch size of
131K, and BF16 model weights and activations. We apply gradient clipping with
maximum norm 1.0, linear warmup over the first $10\%$ of training steps, and
cosine decay to $10\%$ of the peak learning rate. 

\begin{table*}[t]
\centering
\small
\begin{tabular}{lccccccc}
\toprule
Params & Hidden & Intermediate & Heads & Layers & Steps & Tokens & LR \\
\midrule
60M  & 512  & 1376 & 8  & 8  & 10K & 1.3B & 0.001 \\
100M & 640  & 1708 & 10 & 12 & 20K & 2.6B & 0.001 \\
350M & 1024 & 2736 & 16 & 24 & 60K & 7.8B & 0.0004 \\
\bottomrule
\end{tabular}
\caption{Model configurations used in our experiments. Data amounts are given in tokens.}
\label{tab:model-configs}
\end{table*}
\begin{figure*}[t]
\centering
\includegraphics[width=\linewidth]{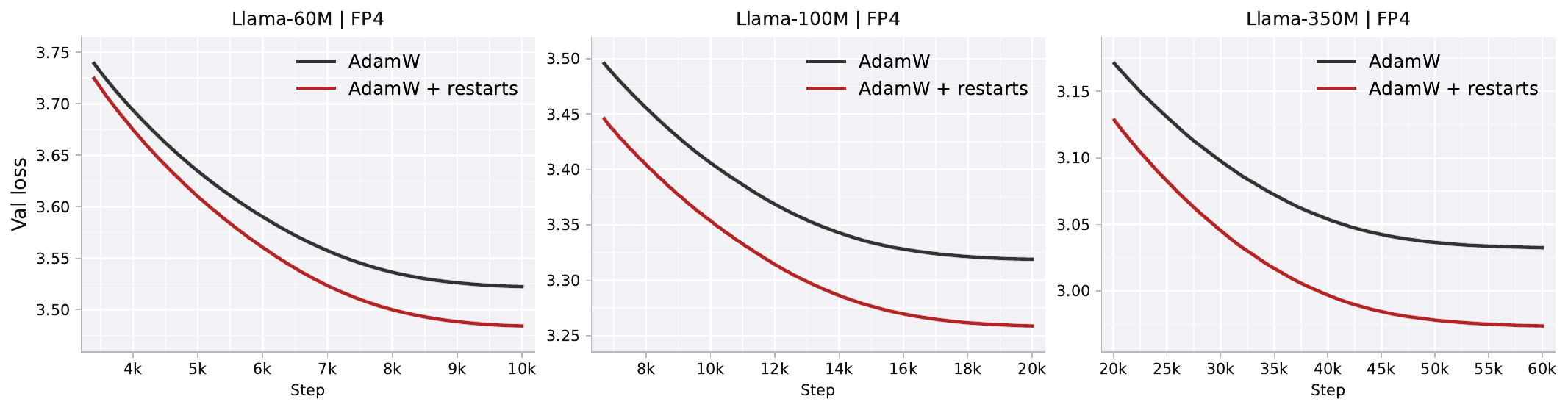}
\vspace{-2em}
\caption{Training loss curves for the three model scales with FP4 optimizer-state storage.}
\label{fig:training-curves-fp4}
\vspace{-1em}
\end{figure*}
We train all models on C4~\cite{raffel2020exploring} using AdamW with
$\beta_1=0.9$, $\beta_2=0.999$, and AdamW epsilon
$\epsilon_{\mathrm{adam}}=10^{-6}$. The peak learning rates used for each model
are reported in \Cref{tab:model-configs} and are kept fixed across precision
regimes. SOAP-specific parameters for the ablation  include the preconditioner update frequency every 10 steps, maximum preconditioner dimension 10,000, and preconditioner EMA coefficient $\beta=0.999$

We evaluate three optimizer-state precision settings: BF16, FP8 (E4M3 for both
moments with per-tensor scaling), and FP4 (E2M1 for the first moment and
unsigned E2M2 for the second moment with block size 128). Model weights and
activations remain in BF16.

Because current hardware does not provide native support for all of the
low-precision state formats and rounding modes considered here, we simulate
quantized optimizer-state storage in software. Concretely, optimizer states are
dequantized to high precision for the update, then quantized and stored in the target
format after each step, using either nearest or stochastic rounding depending
on the experiment.

When enabled, stochastic rounding is applied to all quantized states. Periodic
EMA resets are applied to both moments unless stated otherwise. The reset
period for the second moment is chosen using
\Cref{eq:reset-heuristic-main}, yielding periods of 1000 (BF16), 300 (FP8),
and 200 (FP4). The first moment uses the same period unless otherwise
specified.

\subsection{Additional Results}
\label{app:additional-results}

We report additional results and plots omitted from the main text.

To better understand the role of resets relative to scaling, we also evaluate
FP4 with per-tensor scaling instead of the block-wise scaling used in the main
text. As expected, per-tensor scaling leads to substantially worse performance,
since the effective quantization grid is much coarser. In this regime, resets
should be viewed as complementary to scaling rather than a substitute for it:
even with resets, the validation loss reaches only $\approx 3.8$, compared to
$3.45$--$3.55$ with block-wise scaling.

\begin{figure}[h]
\centering
\includegraphics[width=0.9\linewidth]{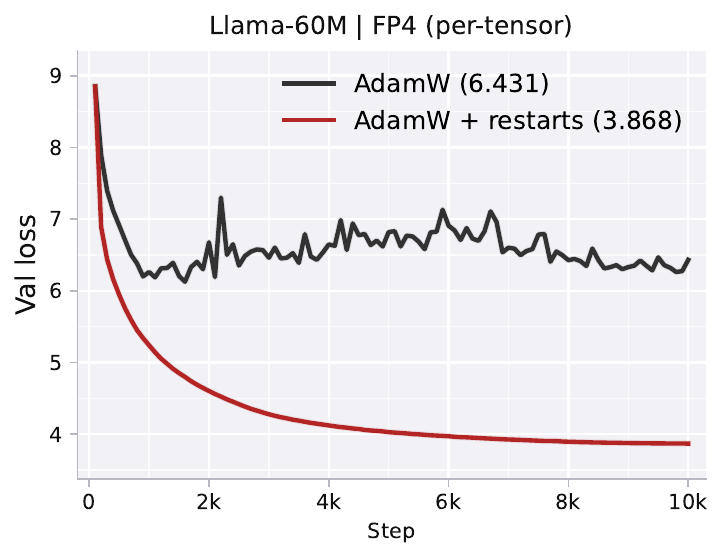}
\caption{Training loss curves for LLaMA-60M with FP4 optimizer-state storage under per-tensor scaling.}
\label{fig:training-curves-fp4-pertensor}
\vspace{-1em}
\end{figure}

However, resets remain critical in this setting. Without resets, training fails
to converge, whereas with resets it reaches a reasonable loss. This further
supports our interpretation that resets act by restoring responsiveness in
highly stalled regimes, even when the underlying precision is too limited to
match stronger scaling strategies.

\subsection{Algorithms}

Here $s_t$ may denote a single EMA state or a tuple of optimizer states; when
multiple states are present, $\Phi,\Psi,\Q$, and $\D$ are understood
componentwise.

\begin{algorithm}[H]
\footnotesize
\caption{Stateful Optimizer with Quantized State Storage and Periodic Resets}
\label{alg:general-quant-restarts}
\begin{algorithmic}[1]
\Require Update maps $\Phi,\Psi$; hyperparameters $h$; quantizer $\Q$ (NR or SR);
dequantizer $\D$; reset period $K$; initial state $s_{\mathrm{init}}$ (often $0$);
(optional) resettable schedule state $\xi_{\mathrm{init}}$.
\State Initialize $\theta_0$;\; $\bar s_0 \leftarrow \Q(s_{\mathrm{init}})$;\; $k\leftarrow 0$;\;
(optional) $\xi\leftarrow \xi_{\mathrm{init}}$
\For{$t=1,2,\ldots$}
    \State $g_t \leftarrow \nabla f_t(\theta_{t-1})$
    \State $s^{\mathrm{hp}}_{t-1} \leftarrow \D(\bar s_{t-1})$ \Comment{load quantized state}
    \State $k \leftarrow k + 1$ \Comment{local step within the current reset cycle}
    \State $s^{\mathrm{hp}}_t \leftarrow \Phi\!\bigl(s^{\mathrm{hp}}_{t-1},\, g_t,\, \theta_{t-1};\, h,\, k,\, \xi\bigr)$
    \State $\Delta\theta_t \leftarrow \Psi\!\bigl(s^{\mathrm{hp}}_t,\, g_t,\, \theta_{t-1};\, h,\, k,\, \xi\bigr)$
    \State $\theta_t \leftarrow \theta_{t-1} + \Delta\theta_t$ \Comment{parameter update in higher precision}
    \State $\bar s_t \leftarrow \Q\!\bigl(s^{\mathrm{hp}}_t\bigr)$ \Comment{requantize state for storage}
    \If{$k = K$}
        \State $\bar s_t \leftarrow \Q(s_{\mathrm{init}})$ \Comment{often $\bar s_t \leftarrow 0$}
        \State $k \leftarrow 0$
        \State (optional) $\xi \leftarrow \xi_{\mathrm{init}}$ \Comment{reset bias correction / schedules}
    \EndIf
\EndFor
\end{algorithmic}
\end{algorithm}

\begin{algorithm}[h]
\footnotesize
\caption{Stateful Optimizer with Quantized State Storage and Adaptive Resets}
\label{alg:adaptive-restarts}
\begin{algorithmic}[1]
\Require Update maps $\Phi,\Psi$; hyperparameters $h$; quantizer $\Q$; dequantizer $\D$;
decay $\beta_2$; reset tolerance $s_0$; steady-state stalled fraction
$P_{\mathrm{stall}}^{\mathrm{ss}}$ (we use $P_{\mathrm{stall}}^{\mathrm{ss}}=1$ in practice);
initial state $s_{\mathrm{init}}$ (often $0$); optional resettable schedule state
$\xi_{\mathrm{init}}$.
\State Initialize parameters $\theta_0$
\State Initialize quantized state $\bar s_0 \leftarrow \Q(s_{\mathrm{init}})$
\State Initialize cycle counter $k\leftarrow 0$
\State Initialize accumulated excess staleness $A\leftarrow 0$
\State (optional) initialize schedule state $\xi\leftarrow \xi_{\mathrm{init}}$
\For{$t=1,2,\ldots$}
    \State $g_t \leftarrow \nabla f_t(\theta_{t-1})$
    \State $s_{t-1}^{\mathrm{hp}} \leftarrow \D(\bar s_{t-1})$
    \State $k \leftarrow k+1$
    \State $s_t^{\mathrm{hp}} \leftarrow \Phi(s_{t-1}^{\mathrm{hp}}, g_t, \theta_{t-1}; h, k, \xi)$
    \State $\Delta\theta_t \leftarrow \Psi(s_t^{\mathrm{hp}}, g_t, \theta_{t-1}; h, k, \xi)$
    \State $\theta_t \leftarrow \theta_{t-1} + \Delta\theta_t$
    \State $\bar s_t \leftarrow \Q(s_t^{\mathrm{hp}})$
    \State Compute the empirical stalled fraction
    \[
    P_{\mathrm{stall}}(k)
    \leftarrow
    \frac{1}{d}\sum_{i=1}^d \mathbf{1}\!\left[(\bar s_t)_i=(\bar s_{t-1})_i\right]
    \]
    where $d$ is the number of entries in the state tensor
    \State Compute normalized stalling progress
    \[
    S(k)\leftarrow \frac{P_{\mathrm{stall}}(k)}{P_{\mathrm{stall}}^{\mathrm{ss}}}
    \]
    \State Update accumulated excess staleness
    \[
    A \leftarrow A + \left[\frac{S(k)-s_0}{1-s_0}\right]_+
    \]
    \State Compute cycle-averaged excess staleness
    \[
    \bar S_{s_0}(k)\leftarrow \frac{A}{k}
    \]
    \State Compute remaining statistical error
    \[
    E(k)\leftarrow \frac{2\beta_2^k}{1+\beta_2^k}
    \]
    \If{$\bar S_{s_0}(k)\ge E(k)$}
        \State \textbf{reset:} $\bar s_t \leftarrow \Q(s_{\mathrm{init}})$
        \State $k\leftarrow 0$
        \State $A\leftarrow 0$
        \State (optional) $\xi\leftarrow \xi_{\mathrm{init}}$
    \EndIf
\EndFor
\end{algorithmic}
\end{algorithm}
\end{document}